\documentclass{amsart}

\usepackage{amsmath}
\usepackage{amssymb}
\usepackage{amsfonts}
\usepackage{mathtools}
\usepackage{amsthm}
\usepackage{amsfonts}
\usepackage{mathrsfs}
\usepackage{subcaption}
\usepackage{xcolor}
\usepackage{float}
\usepackage{enumitem}
\usepackage{placeins}
\usepackage{algorithm}
\usepackage{algpseudocode}
\usepackage{hyperref}
\usepackage{multirow}

\usepackage[backend=biber]{biblatex}
\newtheorem{theorem}{Theorem}[section]
\newtheorem{proposition}[theorem]{Proposition}
\newtheorem{lemma}[theorem]{Lemma}

\theoremstyle{definition}
\newtheorem{definition}[theorem]{Definition}
\newtheorem{assumption}[theorem]{Assumption}
\theoremstyle{remark}
\newtheorem{remark}[theorem]{Remark}

\newcommand{\E}{{\mathbb E}}

\newcommand{\R}{{\mathbb R}}
\renewcommand{\L}{\mathscr{L}}

\newcommand{\norm}[1]{\left\|#1\right\|}

\newcommand{\C}{\mathcal C}
\newcommand{\N}{\mathcal N}

\newcommand{\dd}{{\mathrm d}}
\newcommand{\fabian}[1]{{#1}}

\definecolor{myDarkBlue}{RGB}{0, 0, 139}
\newcommand{\lam}[1]{\textcolor{myDarkBlue}{#1}}

\renewcommand{\lam}[1]{{#1}}
\newcommand{\revise}[1]{{#1}}

\newcommand{\mua}{\mu_{{\mathrm a}}}  
\newcommand{\mus}{\mu'_{{\rm s}}}
\newcommand{\mm}{ {\rm mm}}

\newlength{\globaltextwidth}
\AtBeginDocument{\setlength{\globaltextwidth}{\textwidth}}

\title[SBD for severely ill-posed problems in DOT]{Score-based diffusion models for severely ill-posed problems in diffuse optical tomography}

\thanks{$^*$ Corresponding author. $^1$ Department of Computational Engineering, School of Engineering Science LUT University, Finland. \texttt{tapio.helin@lut.fi}\\
$^2$ Institute of Analysis and Scientific Computing, TU Wien, Austria. \texttt{fabian.schneider@asc.tuwien.ac.at, leila.taghizadeh@asc.tuwien.ac.at}\\
$^3$ Department of Technical Physics, Faculty of Science, Forestry and Technology, University of Eastern Finland, Finland. \texttt{meghdoot.mozumder@uef.fi, konstantin.tamarov@uef.fi, tanja.tarvainen@uef.fi}\\
$^4$ Research Unit of Mathematical Sciences, University of Oulu, Finland. \texttt{duc-lam.duong@oulu.fi}.}

 \author[F.S]{Fabian Schneider$^{1, 2, *}$}
\author[M.M.]{Meghdoot Mozumder$^3$}
\author[K.T]{Konstantin Tamarov$^3$}
\author[L.T.]{Leila Taghizadeh$^2$}
\author[T.T.]{Tanja Tarvainen$^3$}
\author[T.H.]{Tapio Helin$^1$}
\author[D-.L.D.]{Duc-Lam Duong$^{1,4}$}
\begin{document}

\keywords{Diffusion model, score-based diffusion,  infinite-dimensional Bayesian inverse problem, diffuse optical tomography, difference imaging}

\subjclass[2020]{35R30, 92C55, 62F15, 62G99}
\maketitle

\begin{abstract}
Score-based diffusion models are a recently developed framework for posterior sampling in Bayesian inverse problems, enabling high-quality reconstructions in inverse problems by leveraging expressive prior distributions learned from empirical data. Despite their strong empirical performance and growing interest within the machine learning community, their behaviour in realistic, severely ill-posed inverse problems with experimental measurement data remains under-explored.
Diffuse optical tomography (DOT) is an inverse boundary value problem that uses boundary measurements of near-infrared light to recover spatially varying absorption and scattering parameters in biological tissue. The problem is highly ill-posed and particularly sensitive to both measurement noise and modelling errors.

We introduce a regularization strategy by constructing a mixed score consisting of a learned component and a model-based component. \fabian{We show that the resulting mixed score approximates the score of a corresponding mixture distribution locally and in the small diffusion-time regime, providing a theoretical justification for the approach.}

We compare four approaches for difference imaging in DOT: a classical model-based method, an approximate score-based diffusion method (DPS), an exact posterior sampling method (UCoS) and a novel, regularized version of UCoS. 
We show that both the model-based approach and approximate diffusion-based sampling degrade significantly in the presence of limited-view geometry and real experimental data, whereas UCoS yields more accurate reconstructions.
\end{abstract}

\section{Introduction}

Score-based diffusion (SBD) models provide a framework for sampling from complex probability distributions by learning from empirical data, enabling high-quality image generation with state-of-the-art performance \cite{sohldicksteindeep15, Ho2020DenoisingDP, Song2020ScoreBasedGM}.
By training on expert-generated image datasets, these models capture highly complex distributions, often outperforming classical prior models that rely on handcrafted structures and tuned parameters.

In the context of Bayesian inverse problems, diffusion-based priors have recently gained significant attention due to their strong empirical performance \cite{song2022solving, chung2023diffusion, dou2024diffusion, baldassari2023conditional, feng2023scorebaseddiffusionmodelsprincipled, schneider2025scalablediffusionposteriorsampling}.
They enable posterior sampling, which facilitates comprehensive uncertainty quantification, including estimates of posterior mean and covariance.

These methods have been applied to a range of inverse problems, including computed tomography (CT) \cite{song2022solving, schneider2025scalablediffusionposteriorsampling}, magnetic resonance imaging (MRI) \cite{song2022solving}, photoacoustic tomography \cite{Dey2024SBDforPAT}, and seismic imaging \cite{baldassari2023conditional}. 
Crucially, three aspects remain under-explored in the literature. 
First, highly ill-posed inverse problems, such as inverse boundary value problems, have received comparatively little attention. In such problems the measurement provides relatively little information and the reconstructions are heavily influenced by the prior distribution. It is thus crucial to utilize an expressive prior and minimize errors through approximate conditioning of the score function.
Second, most existing studies rely on simulated measurement data, rather than data acquired from physical measurement systems.
The impact of approximation errors due to imperfect mathematical models and assumptions are a fundamental component of solving realistic inverse problems and cannot be neglected when assessing the performance of learned reconstruction methods.
Finally, most studies assume a perfect match between the training and evaluation data distributions. In practice, however, distributional mismatch, limited training data, and expert information not captured in finite training data, such as tail behaviour or physical constraints, can all hinder the performance of learned score models.

Diffuse optical tomography (DOT) utilises boundary measurements of near-infrared light to estimate spatially distributed absorption and scattering parameters in biological tissues \cite{arridge1999optical, arridge2009optical, Aspri2024}. Mapping these optical parameters provides valuable information about tissue function and structure, with applications in breast cancer imaging, neonatal brain imaging, functional imaging of the adult brain, and preclinical small animal imaging \cite{durduran2015,hoshi2016overview}.

The image reconstruction problem in diffuse optical tomography (DOT) is a non-linear and highly ill-posed inverse problem that involves estimating the spatial distribution of optical absorption and scattering parameters \cite{arridge1999optical,arridge2009optical}. A linearized approximation based on a first-order Taylor expansion of the forward model (e.g., Born or Rytov approximation) is often employed \cite{arridge2009optical}. In the linearized approach,  changes in measured signals are considered as a linear function of the changes in the optical parameters, and is an ill-posed linear inverse problem. Given the ill-posedness of the DOT image reconstruction, regularization techniques that assume the unknowns as smooth, sparse, or with limited changes, are typically utilized to achieve stable inversion \cite{Aspri2024}. In a similar way, Bayesian estimation incorporates prior probability distributions based on existing knowledge about the unknowns, to compute the posterior probability distribution, providing a measure of the estimates' uncertainty \cite{kaipio2006statistical, stuart2010inverse}. 

Recently, deep learning (DL) has emerged as a data-driven alternative to traditional model-based DOT reconstructions. 
One of the first attempts towards deep learned image reconstruction was carried out in \cite{ben2018deep} using a neural network comprised of fully connected and convolution layers. 
Yoo {\em et al.} \cite{yoo2019deep} used convolutional framelet theory to demonstrate that a neural network can represent the DOT image reconstruction, and used a similar network architecture to estimate difference absorption coefficients. The approach was extended to use a separately trained U-Net, providing improved estimates \cite{deng2023fdu}. 
As an alternative to these direct learned reconstructions, unrolled deep networks have also been proposed for DOT \cite{mozumder2021model,liu2020learnable,yi2024enhanced}. 
In this approach, an iterative optimization algorithm is explicitly unrolled into a neural network architecture such that each iteration of the original algorithm corresponds to a layer (or block) of the network. 
Other approaches used co-registered ultrasound \cite{zou2021machine} or MRI \cite{yu2025high} images to train neural networks, by incorporating a total variation-based loss between the optical parameters and the co-registered image. 

Nevertheless, acceptance of DL techniques in practical applications has often been restricted by opaque predictions and unreliable confidence estimates. To address this, uncertainty quantification (UQ) techniques have been proposed in deep learned imaging techniques, that can estimate confidence bounds and highlight model failures or weaknesses \cite{lambert2024trustworthy}. 
The Bayesian framework, where all the unknowns and measured quantities are considered as random variables, is a natural choice for UQ.
The estimated posterior distribution can be used to compute \revise{credible} intervals, and thereby quantify the uncertainty in the estimates. In practice, however, exact Bayesian inference is rarely feasible for deep networks because of its computational demands. This gap is overcome by diffusion models, which are able to compute many samples in a feasible time frame even for large scale problems \cite{song2022solving, baldassari2023conditional, schneider2025scalablediffusionposteriorsampling}. To the best of our knowledge, diffusion models have not yet been applied to the DOT problem.

\subsection{Our contribution}

In this work, we consider the severely ill-posed inverse boundary problem in diffuse optical tomography.

To help to encode expert information into the distribution which is not necessarily well-presented in training data for example due to limited or imperfect training data, we introduce a novel framework to interpolate between two score functions, for example learned score and a model-based score. The approach is motivated as a non-centred version of the Tikhonov denoising score-matching proposed by \cite{Baptista2025MemorizationAR}. 
It corresponds to a convex combination of the score functions and we show that this construction locally approximates, in the small diffusion-time regime, the score of a geometric mixture of the corresponding probability density functions (PDFs). Crucially, the mixing parameter that interpolates the two score functions can be tuned online and without retraining of an underlying score function.

We consider four approaches for posterior sampling in the DOT problem: 
a classical model-based reconstruction method, the approximate score-based diffusion approach Diffusion posterior sampling (DPS) \cite{chung2023diffusion}, the recently proposed unconditional representation of the conditional score function (UCoS) framework \cite{schneider2025scalablediffusionposteriorsampling} and a regularized version of UCoS.

The classical model-based approach, based on a Gaussian prior, serves as a baseline grounded in explicit regularization and analytical modelling. 
DPS is a well studied method for posterior sampling in SBD. It introduces additional approximations in the sampling procedure, which may affect accuracy in challenging inverse settings and requires repeated evaluations of the forward map during sampling, both classical challenges for many approaches in SBD for inverse problems.
UCoS serves as an exact posterior sampling method for large-scale linear inverse problems. It is designed to enable efficient and accurate sampling by shifting the computational burden from inference to training. In particular, it avoids repeated evaluations of the forward operator during sampling, eliminates the need for an explicit matrix representation of the linear forward map, and maintains a tractable learning problem in a reduced-dimensional setting.

We provide numerical results on experimental data, and simulations covering full-view and limited-view geometries as well as in- and out-of-distribution targets, thereby directly addressing the gap between idealized simulation studies and real measurement settings highlighted in the literature.

We demonstrate that the approximate diffusion method and the model-based approaches degrade significantly under limited-view and experimental conditions, whereas UCoS-based sampling yields more accurate reconstructions of inclusion structure under these settings.
Across all experiments, UCoS recovers inclusion locations and amplitudes the most reliably, avoiding artefacts that arise from approximate conditioning of the score function. The proposed regularized UCoS variant further improves robustness in limited-view and distribution-shift scenarios, while maintaining competitive performance on experimental data.

\lam{
\section{Preliminaries} \label{sec:preliminaries}
\subsection{Bayesian inverse problems}
Consider a linear inverse problem
\begin{equation}
	\label{eq:ip}
	 y = Ax + \epsilon,
\end{equation}
where $y \in \R^m$ is the observed data, $x\in H$ is the unknown taking values in a separable Hilbert space $H$, equipped with inner product $\langle \cdot, \cdot \rangle$ and norm $\|\cdot\|_H$, and $A: H \rightarrow \R^m$ is the linear forward map. We denote by $A^*: \R^m \rightarrow H$ the adjoint of $A$.
We adopt the Bayesian framework \cite{stuart2010inverse} for the inverse problem \eqref{eq:ip}. In this framework, prior information on the unknown $x$ is encoded through a probability measure $\mu$ supported on $H$. 
The observational noise in \eqref{eq:ip} is assumed to be Gaussian, $\epsilon \sim \N(0, {\Gamma_{\text{obs}}})$,  where ${\Gamma_{\text{obs}}} \in \R^{m \times m}$ is symmetric positive definite.
Under these assumptions, the posterior measure of $x$ given data $y$ is absolutely continuous with respect to the prior measure $\mu$, with Radon-Nikodym derivative given by, $\mu$-almost everywhere, 
\begin{equation}
    \label{eq:posterior}
     \frac{\dd \mu^y}{\dd\mu}(x)  =  \frac{1}{Z(y)} \exp\left(-\frac 12 \norm{Ax -y}_{\Gamma_{\text{obs}}}^2\right),
\end{equation}
where
\begin{equation*}
    Z(y)  =  \int_H \exp\left(-\frac 12 \norm{Ax -y}_{\Gamma_{\text{obs}}}^2\right) \mu(\dd x)
\end{equation*}
is the normalising constant. We also use the weighted norm notation $\norm{v}_C^2 := \langle C^{-1} v, v \rangle$, where $C$ is an appropriately defined symmetric positive matrix (or operator) on $\R^m$ (or $H$), throughout the paper.}

\lam{In Bayesian inverse problems, one of the objectives is to sample from the posterior $\mu^y$. However, sampling from high-dimensional problems is often \revise{computationally} challenging in practice, especially when the underlying forward problem is governed by complicated PDEs such as the DOT problem. In the following sections, we first introduce the DOT problem together with its governing PDEs and measurement model (Section \ref{subsec:DOT}). The classical Gaussian model-based approach is introduced in Section \ref{subsec: modelbased}. Finally in Section \ref{subsec:SBD-sampling} we briefly review the important elements of the posterior sampling strategies using score-based diffusion models, including the conditional approach, unconditional diffusion posterior sampling (DPS), and UCoS framework.}

\subsection{Diffuse optical tomography}
\label{subsec:DOT}

In a typical DOT measurement setup, near-infrared light is introduced into an object from its boundary. Let $\Omega \subset \mathbb{R}^d, \, (d=2 \, \rm{or} \, 3)$ denote the bounded domain with boundary $\partial \Omega$ where $d$ is the spatial dimension of the domain. In a diffusive medium like soft biological tissue, the commonly used light transport model is the diffusion approximation to the radiative transfer equation \cite{arridge1999optical, arridge2009optical}. Here, we consider the frequency-domain version of the diffusion approximation
\begin{align}
\begin{aligned}\label{deeqn}
 \left(-\nabla \cdot \frac{1}{{d}(\mua(r)+\mus(r))} \nabla  + \mua(r) + \frac{{\rm j}\omega}{c} \right) \Phi(r) = 0, \hspace{0.1mm} r \in \Omega \\
\Phi(r)+\frac{1}{2\zeta}\frac{1}{d(\mua (r)+\mus(r))} \alpha \frac{\partial \Phi(r)}{\partial \hat{n}} = \left\{ \begin{array}{ll}
         \frac{q}{\zeta}, & r \in s\\
         0, & r \in \partial \Omega \setminus s \end{array} \right. ,
\end{aligned}
\end{align}
where $\Phi(r)$ is the photon fluence, $\mua(r)$ is the absorption coefficient, $\mus(r)$ is the (reduced) scattering coefficient, j is the imaginary unit and $c$ is the speed of light in the medium. The parameter $q$ is the strength of the light source at location $s \subset \partial\Omega$, operating at angular modulation frequency $\omega$. Further, the parameter $\zeta$ is a dimension-dependent constant ($\zeta$ = $1/\pi$ when $\Omega \subset \mathbb{R}^{2}$, $\zeta$ = $1/2$ when $\Omega \subset \mathbb{R}^{3}$) and $\alpha$ is a parameter governing the Fresnel reflection at the boundary $\partial \Omega$, and $\hat{n}$ is an outward unit vector normal to the boundary. The measurable data on the boundary of the object, exitance $\Gamma(r)$, is given by
\begin{equation}\label{bdd}
\Gamma(r) = -\frac{1}{d(\mua (r)+\mus(r))} \frac{\partial \Phi(r)}{\partial \hat{n}} = \frac{2\zeta}{\alpha}\Phi(r). 
\end{equation}

Let $H$ be a Hilbert space that supports the infinite dimensional optical parameters, $(\mua,\mus) \in H^2$. By $\mathcal{A}$ denote the forward operator that maps the optical parameters to the finite dimensional boundary data $\Gamma \in \R^m$. For frequency-domain DOT, the measured data typically consist of the logarithm of the complex boundary flux, written as
\begin{equation}\label{meas}
Y = \left(\begin{array}{c}
{\rm Re} \log(\Gamma)\\
{\rm Im} \log(\Gamma)
\end{array} \right) = \left(\begin{array}{c}
\log({\rm Amplitude})\\
{\rm Phase}
\end{array} \right),
\end{equation}
so that the forward model reads
\begin{equation}\label{infinite_forward}
Y = \mathcal{A}(\mua, \mus) + E,
\end{equation}
where $E$ models the additive random noise in measurements.
Consider two measurements $Y_{1}$ and $Y_{2}$ corresponding to optical parameters $X_1=$ ($\mu^{}_{{\rm a},1},\mu'_{{\rm s},1}$) and $X_2=$ ($\mu^{}_{{\rm a},2},\mu'_{{\rm s},2})$, respectively. The perturbation in the data due to changes in the parameters, $\delta Y = Y_2 - Y_1$, is given by the linear perturbation model \cite{arridge2009optical}
\begin{equation}\label{modeldiff}
      \delta Y = J \delta X + e,
    \end{equation}
where the Jacobian $J=(\frac{\partial \mathcal{A}}{\partial \mua}, \frac{\partial \mathcal{A}}{\partial \mus})$ denotes the Fr{\'e}chet derivative of the forward operator $\mathcal{A}$, and $e$ represents the measurement noise in the difference data. The linearization (\ref{modeldiff}) is based on a first-order Taylor approximation and it is widely considered robust for biological targets where variations in optical properties are relatively small.

Considering a typical range of difference absorption and scattering parameters $\delta\mua \in [-0.01,\: 0.01] {\rm mm}^{-1}$ and $\delta\mus \in [-1,\: 1] {\rm mm}^{-1}$ \cite{jacques2013optical}, we define the following affine transforms
\begin{eqnarray*}
    \tilde{\delta \mua}(r) = \frac{\delta\mua(r) + 0.01 }{0.02},\;\tilde{\delta \mus}(r) = \frac{\delta\mus(r) + 1}{2}, \qquad r \in \Omega
\end{eqnarray*}
where the rescaled optical parameters $x = (\tilde{\delta \mua}, \tilde{\delta \mus})$ lie in the range $[0,\: 1]$.
The linear model is obtained by substituting these into the model (\ref{modeldiff})
\begin{equation}\label{modeldiffnorml}
	 y = Ax + \epsilon,
\end{equation}
where $A = (0.02\frac{\partial \mathcal{A}}{\partial \mua}, 2\frac{\partial \mathcal{A}}{\partial \mus})$ is the scaled Jacobian, $x \in H, y \in \R^m$ and the noise term $\varepsilon \in \R^m$ is given by an affine transformation of the measurement noise $e$ in \eqref{modeldiff}.

\lam{In DOT, due to the diffusive nature of light propagation in biological tissue which strongly attenuates information about the interior optical parameters, the boundary measurements are heavily smoothed and blurred. The resulting inverse problem \eqref{modeldiffnorml} is thus severely ill-posed, making posterior sampling and uncertainty quantification particularly challenging.}

\subsection{Bayesian `model-based' estimation with a Gaussian prior}\label{subsec: modelbased}
\lam{In this section, we consider a Gaussian prior distribution $\nu = \N(m,S)$ associated \revise{with} the Ornstein-Uhlenbeck (OU) random process \cite{rasmussen2006}. The covariance function of the OU process is given by}
\begin{equation}
\label{eq:prior_covariance}
s(r_1, r_2) = \sigma^2 \exp( - \|r_1 - r_2\|  / \ell )
\end{equation}
where $r_1, r_2 \in \Omega \subset \R^d$ denote the spatial locations. Here, $\sigma$ stands for the marginal standard deviation and $\ell$ describes the characteristic length scale of the process, i.e., the spatial distance that the parameter is expected to have a significant spatial correlation. Moreover, we assume that the domain of interest $\Omega$ is compact.

Since $\Omega$ is compact, the Ornstein-Uhlenbeck covariance kernel $s$ belongs to $L^2(\Omega\times \Omega)$, and the associated integral operator $S:L^2(\Omega)\to L^2(\Omega)$ is self-adjoint, positive, and trace class \revise{(see \cite[Section 4.3]{rasmussen2006})}.  
Consequently, there exists a centered Gaussian random variable taking values in $L^2(\Omega)$ whose covariance operator is $S$, i.e., the Ornstein-Uhlenbeck random field admits a well-defined $L^2(\Omega)$-valued random variable version.
Notice carefully that in this section, for convenience, $\nu$ stands for the probability distribution of a single random function on $\Omega\subset \R^d$. However, in later sections we write $\nu$ for the distribution of the independent product of two random functions, namely, absorption and scattering coefficients, on $\Omega$.

\lam{
Under the Gaussian prior $\nu = \N(m, S)$ and the additive Gaussian observational noise $\epsilon \sim \N(0, {\Gamma_{\text{obs}}})$, the posterior corresponding to the linear inverse problem \eqref{eq:ip} is Gaussian, $\nu^y = \N(\bar{x}, \Gamma_{\text{post}})$, with mean and covariance
\begin{align}
\begin{aligned}\label{eq: gaussian posterior}
    \bar{x} &= m + S A^* \left( A S A^* + \Gamma_{\text{obs}}\right)^{-1}(y-Am), \\
\Gamma_{\text{post}} &= S - \fabian{S A^* (A S A^* + \Gamma_{\text{obs}})^{-1}A S},
\end{aligned}
\end{align}
see \cite[Theorem 6.20]{stuart2010inverse}. 
In the discretized setting $H = \R^n$ with non-degenerate covariance operator $S$, the posterior mean $\bar{x}$ coincides with the \emph{maximum a posteriori estimate} and is characterised by the Tikhonov-regularized optimization problem
\begin{equation}
\label{eq:map_gaussian_compact}
\bar{x} = \operatorname*{\arg \, \min}_{x} 
\left[\| y - A x\|^2_{\Gamma_{\text{obs}}} + \| \fabian{x -m}\|^2_{S}\right].
\end{equation}
For further details, we refer to \cite{stuart2010inverse, kaipio2006statistical}. 
In the following, we refer to this approach as \emph{model-based} since it relies solely on the mathematical model for the forward map and the prior distribution based on expert knowledge rather than  being data-driven.} \revise{We also note that, being a Gaussian distribution, the posterior arising from this approach admits a \emph{closed-form diffusion score}. This score forms the model-based component of the regularized UCoS method \revise{developed} in Section \ref{sec:reg-UCoS}.}

\subsection{Posterior sampling using score-based diffusion models}\label{subsec:SBD-sampling}

\lam{Score-based diffusion methods for inverse problems can broadly be divided into two classes.} 
\emph{Conditional} methods utilize the score function conditional on the measurement (i.e. the score function of the posterior), whereas \emph{unconditional} methods use the unconditional (prior) score function and approximately steer samples towards the posterior distribution using the measurement information and forward map during sampling.
\lam{For the convenience of the reader, we briefly summarize below both approaches and the UCoS framework, which provide the main ingredients for understanding the regularized UCoS method we develop in Section \ref{sec:reg-UCoS}. For detailed derivations we refer the reader to the original papers \cite{baldassari2023conditional, chung2023diffusion, schneider2025scalablediffusionposteriorsampling}. See also \cite{Song2020ScoreBasedGM} for the foundational ideas of SBD models.}

\subsubsection{Conditional score-based diffusion} \label{subsec:ConditionalSBD}
The mathematical setup of conditional score-based diffusion models is established in \cite{baldassari2023conditional}. The framework utilizes a stochastic differential equation (SDE) that transports the measure $\mu^y$ to $\N(0, \C)$ and its reversal. This enables posterior sampling by starting from the Gaussian measure and solving the SDE backwards in time. 
Given $T>0$, let $\{X_t\}_{t=0}^T$ stand for the infinite-dimensional diffusion process for a continuous time $t\in [0,T]$ satisfying the following SDE
\begin{eqnarray}\label{eq: diffusion model; initiated from posterior} 
    dX_t = -\frac{1}{2} X_t dt +  \C^{1/2} dB_t,
    \end{eqnarray}
with $X_0 \sim \mu^y$, where $\mu^y$ is the posterior in \eqref{eq:posterior} with prior $\mu$. The process $X_t$ corresponds to images from the posterior distribution with added noise.
The conditional distribution of $X_t|X_0$ is given by
\begin{eqnarray*}
    \L(X_t | X_0) = \N(e^{-t/2}X_0, (1-e^{-t})\C),
\end{eqnarray*}
where  $\L(\cdot)$ denotes the law of a random quantity.
The process transports $\mu^y$ to $\N(0, \C)$ as $T\rightarrow\infty$. The reverse-time dynamics are determined by the conditional score function associated with the posterior distribution, defined as follows.
\begin{definition}[\cite{baldassari2023conditional}] \label{def:conditional_score_infiniteD}
    Define the  conditional score function $s(x,t; \mu^y)$ for $x \in H$ as
\begin{equation}
\label{eq:conditional_score_infiniteD}
       s(x, t; \mu^y) = \fabian{-\frac{1}{1-e^{-t}}}\left(x - e^{\frac{-t}{2}}\E(X_0 | Y = y, X_t = x)\right).
\end{equation}
\end{definition}

Using the arguments of \cite{PidstrigachInfiniteDiffusion2024}, it can be shown that, in finite dimensions, the conditional score defined in \eqref{eq:conditional_score_infiniteD} reduces to the usual gradient of the logarithm of the marginal density. This result will later be useful in the analysis of the score mixture introduced in Section \ref{sec:reg-UCoS}.
\begin{lemma} \label{lem:score-finiteD}
    Assume that $H = \revise{ \R^n}$ and the posterior $\mu^y$ admits a density $q^y$. \lam{Then the conditional score function satisfies}
\begin{eqnarray*}
    s(x, t; \mu^y) = \C \nabla_x \log q_t^y(x) = \C \nabla_x \log \left( q^y \ast k_t\right)(x),
\end{eqnarray*}
where the convolution kernel $k_t$ is given by
\begin{eqnarray}\label{eq: convolution kernel}
    k_t(x; z) = (2 \pi (1-e^{-t}))^{\revise{-n/2}} \revise{\det(\C)^{-1/2} \exp \left(-\frac{\left\|x-e^{-t/2}z\right\|_{\C}^2}{2(1-e^{-t})}\right)}.
\end{eqnarray}
\end{lemma}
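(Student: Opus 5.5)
The plan is to derive Tweedie's formula for the Ornstein--Uhlenbeck forward process in finite dimensions and compare it with Definition~\ref{def:conditional_score_infiniteD}. First I will identify the marginal density of $X_t$. Since $X_0\sim q^y$ and $\L(X_t\mid X_0=z)=\N(e^{-t/2}z,(1-e^{-t})\C)$, and $\C$ is a nondegenerate covariance on $\R^n$, the transition density is exactly $k_t(x;z)$ from \eqref{eq: convolution kernel}. Hence
\begin{equation*}
q_t^y(x)=\int_{\R^n} k_t(x;z)\,q^y(z)\,\dd z=(q^y\ast k_t)(x),
\end{equation*}
where the convolution is understood in this generalized sense, with the kernel depending on $x-e^{-t/2}z$. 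This gives the second equality in the lemma. Because $k_t$ is a smooth Gaussian in $x$ and $q^y$ is a probability density, $q_t^y$ is smooth and strictly positive for every $t>0$, so $\log q_t^y$ is well defined.

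Next I will differentiate under the integral sign. This is justified by dominated convergence: the gradient of $k_t$ is $k_t(x;z)$ times a factor affine in $z$, which is locally uniformly integrable against $q^y$ thanks to Gaussian decay in $z$ for $x$ in a compact set. We have
\begin{equation*}
\nabla_x k_t(x;z)=-\frac{1}{1-e^{-t}}\,\C^{-1}\bigl(x-e^{-t/2}z\bigr)\,k_t(x;z).
\end{equation*}
Substituting and dividing by $q_t^y(x)$ gives
\begin{equation*}
\C\nabla_x\log q_t^y(x)=-\frac{1}{1-e^{-t}}\Bigl(x-e^{-t/2}\frac{\int z\,k_t(x;z)q^y(z)\,\dd z}{q_t^y(x)}\Bigr).
\end{equation*}

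Finally, by Bayes' rule, $z\mapsto k_t(x;z)q^y(z)/q_t^y(x)$ is the conditional density of $X_0$ given $X_t=x$ and $Y=y$. Here $X_0\sim\mu^y$ already encodes the conditioning on $Y=y$, and the forward noise is independent of $Y$. So the ratio is $\E(X_0\mid Y=y,X_t=x)$, and the expression coincides with \eqref{eq:conditional_score_infiniteD}.

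The main subtlety is this last identification. The conditional expectation is defined only almost everywhere, so I will take the explicit Bayes formula as the canonical version. This version is continuous in $x$, which matches the pointwise statement. I will also check that $z$ is integrable against $k_t(x;\cdot)q^y$. This holds because $k_t(x;\cdot)$ is bounded and Gaussian-decaying in $z$, so no moment assumption on $q^y$ is needed.
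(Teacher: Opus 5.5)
Your proposal is correct and is essentially the argument the paper has in mind. The paper gives no written proof and defers to the arguments of Pidstrigach et al., which amount to this Tweedie-type computation: differentiate the Gaussian transition density $k_t(x;z)$ under the integral, then identify the normalized ratio with $\E(X_0\mid Y=y,X_t=x)$ via Bayes' rule. Your extra care about the generalized convolution, the positivity and smoothness of $q_t^y$, and the choice of the continuous version of the conditional expectation fills in details the paper leaves implicit.
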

To establish the well-posedness of the reverse-time SDE underlying posterior sampling, we impose the following boundedness assumption on the score function. 
\begin{assumption}\label{ass: SBD assumption}
    Let $\C$ be a trace class covariance operator on $H$. Denote by $H_{\C}= \C^{1/2}(H)$ the Cameron-Martin space associated  with $\C$ (see \cite{schneider2025scalablediffusionposteriorsampling, da2006introduction} for details). Assume that $\mu(H_\C) = 1$ and that, for every $T> 0$, 
    \begin{equation*} 
        \sup_{t \in [0, T]} \E \norm{s(X_t, t; \mu^y)}_H^2 < \infty.
    \end{equation*}
\end{assumption}
The following proposition allows us to reverse the SDE associated with the diffusion process \eqref{eq: diffusion model; initiated from posterior}.
\begin{proposition}[\cite{baldassari2023conditional}] 
\label{thm:reverse-time-well-posed}
    Let Assumption \ref{ass: SBD assumption} hold and let $Z_t$ solve the following SDE
    \begin{equation}\label{eq: BSDE}
    \begin{aligned}
    dZ_t &= \left(\fabian{\frac{1}{2} Z_t +s(Z_t, T-t; \mu^y)}\right) dt + \C^{1/2} d B_t, \\
    Z_0 & \sim  \L(X_T).
    \end{aligned}
    \end{equation}
    Then the terminal distribution of $Z_t$ satisfies
    \begin{equation*}
        \L(Z_T) = \L(X_0) = \mu^y.
    \end{equation*} 
\end{proposition}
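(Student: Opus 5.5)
The plan is to follow the standard time-reversal argument for infinite-dimensional diffusions, as in \cite{baldassari2023conditional} and \cite{PidstrigachInfiniteDiffusion2024}. First I would reduce to finite-dimensional projections, and then pass to the limit using the uniform $L^2$ bound on the score in Assumption \ref{ass: SBD assumption}. The central observation is that the forward process \eqref{eq: diffusion model; initiated from posterior} is an Ornstein--Uhlenbeck process driven by a $\C$-Wiener process. Its transition law $\N(e^{-t/2}X_0,(1-e^{-t})\C)$ is explicit, so the time-reversed process $\bar X_t := X_{T-t}$ should again be a diffusion. Its diffusion coefficient is $\C^{1/2}$, and its drift is $\tfrac12 \bar X_t + s(\bar X_t, T-t;\mu^y)$. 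Once this is established, uniqueness in law of solutions to \eqref{eq: BSDE} gives $\L(Z_t)=\L(X_{T-t})$ for all $t$. In particular, $\L(Z_T)=\L(X_0)=\mu^y$.

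The key steps, in order, are as follows.
\begin{enumerate}
\item Fix the eigenbasis $\{e_k\}$ of $\C$ and let $P_n$ be the projection onto the first $n$ modes. The coordinates $X^n_t = P_n X_t$ then solve a finite-dimensional OU SDE, because $\C$ is diagonal in this basis. Their law at time $t>0$ has a smooth positive density $q^n_t$, since it is a Gaussian convolution.
\item Apply the classical finite-dimensional time-reversal theorem (Haussmann--Pardoux). Its finite-entropy condition holds here because of the Gaussian smoothing. It says that $X^n_{T-t}$ solves an SDE with drift $\tfrac12 x + \C_n\nabla\log q^n_{T-t}(x)$. By Lemma \ref{lem:score-finiteD}, this drift equals $\tfrac12 x + \E[P_n s(X_{T-t},T-t;\mu^y)\mid P_nX_{T-t}=x]$. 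This uses Tweedie's formula, the representation \eqref{eq:conditional_score_infiniteD}, and the tower property.
\item Pass to the limit $n\to\infty$. The martingale-problem formulation is convenient here: for cylindrical test functions $\varphi$, the process $\varphi(\bar X_t)-\int_0^t \mathcal{L}_{T-r}\varphi(\bar X_r)\,dr$ is a martingale, where $\mathcal L$ is the generator with the proposed drift. The conditional expectations converge in $L^2$ to $s(X_{T-t},T-t;\mu^y)$ by martingale convergence. The integrals converge uniformly in time because of the bound $\sup_{t\le T}\E\|s(X_t,t;\mu^y)\|_H^2<\infty$. The hypothesis $\mu(H_\C)=1$ is used so that $\mu^y\ll\mu$ also charges $H_\C$. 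This keeps the score and the conditional means in the right spaces, and it makes the generator terms well defined.
\item Conclude that $\bar X$ is a weak solution of \eqref{eq: BSDE} with initial law $\L(X_T)$. Since the statement is phrased through "let $Z_t$ solve", uniqueness in law is what transfers the terminal marginal. I would obtain uniqueness from the martingale problem, or else assume it as part of well-posedness as in \cite{baldassari2023conditional}.
\end{enumerate}

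I expect the main obstacle to be the singularity at $t=T$, where $Z$ approaches time $0$ of the forward process. There the score $s(\cdot,T-t;\mu^y)$ degenerates like $(1-e^{-(T-t)})^{-1}$. Assumption \ref{ass: SBD assumption} is designed exactly to control this, so my first attempt would be to run the argument on $[0,T-\delta]$. I would then use the $L^2$ bound to show $\E\int_{T-\delta}^T\|s\|_H\,dr\to 0$ as $\delta\to0$, together with the continuity of $X_t$ at $0$, to identify $\L(Z_T)$ as the limit of $\L(X_\delta)=\L(Z_{T-\delta})$. That limit is $\mu^y$.
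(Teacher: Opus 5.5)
The paper gives no proof of its own. It imports the result from \cite{baldassari2023conditional}, whose argument, like that of \cite{PidstrigachInfiniteDiffusion2024} in the unconditional case, is the one you describe: identify the time reversal $\bar X_t=X_{T-t}$ of the forward Ornstein--Uhlenbeck process as a weak solution of \eqref{eq: BSDE} started from $\L(X_T)$, so that $\bar X_T=X_0\sim\mu^y$. Your steps 1--3 are a sound way to carry this out. Haussmann--Pardoux applies to each $P_nX$ on $[\delta,T]$. Tweedie's formula and the tower property identify the projected drift. L\'evy's upward theorem plus the $L^2$ bound then let you pass to the limit. You should make one detail explicit: the $n$-th martingale property holds only for the filtration generated by $P_n\bar X$. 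It transfers to the filtration of $\bar X$ by testing increments against events in $\bigcup_m\sigma(P_m\bar X_r: r\le s)$, a $\pi$-system generating it. Note also that this route never uses $\mu(H_\C)=1$, so nothing in step 3 depends on the role you assign to it.

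The genuine gap is the first option in step 4, obtaining uniqueness in law from the martingale problem. Assumption \ref{ass: SBD assumption} gives no well-posedness of \eqref{eq: BSDE}, for several reasons.
\begin{enumerate}
\item The drift $x\mapsto s(x,T-t;\mu^y)$ is defined through $\E(X_0\mid Y=y,X_{T-t}=x)$, hence only up to $\L(X_{T-t})$-null sets. The equation is therefore not even specified until a version is fixed.
\item In infinite dimensions this drift is in general merely measurable. The Bayes formula for the conditional mean involves Paley--Wiener functionals $x\mapsto\langle\C^{-1/2}z,\C^{-1/2}x\rangle$, $z\in H_\C$, which are not continuous on $H$.
\item The noise is trace class and there is no smoothing linear part, so regularization-by-noise results are unavailable.
\item Decisively, the $L^2$ bound controls $s$ only along the forward process $X$, not along a hypothetical second solution $Z$. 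It therefore cannot feed a Gronwall or Girsanov comparison.
\end{enumerate}
What the time-reversal argument proves, both yours and the one behind the citation, is that the reversed process satisfies \eqref{eq: BSDE} and has terminal law $\mu^y$. That reversed process is the solution the sampler is built to approximate, and this is how the proposition should be read. If you want the literal statement for every solution $Z$, well-posedness must be an added hypothesis. One example is to require a version of $s(\cdot,t;\mu^y)$ that is locally Lipschitz on $H$, uniformly for $t$ in compact subsets of $(0,T]$. Then pathwise uniqueness up to exit times of balls, Yamada--Watanabe, and the non-explosion of $\bar X$ give $\L(Z_{T-\delta})=\L(X_\delta)$, and your endpoint argument finishes the proof.
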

In a nutshell, Proposition \ref{thm:reverse-time-well-posed} shows that samples from the posterior distribution $\mu^y$ can be generated by solving the reverse-time SDE \eqref{eq: BSDE}, starting from samples drawn from the Gaussian measure $\N(0,\C)$. This principle underpins the ideas of posterior sampling using SBD models.

\subsubsection{DPS: an unconditional score approximation}
Unconditional approaches to posterior sampling with score-based diffusion models are based on the unconditional score function $s(x,t;\mu)$, which depends only on the prior distribution and can therefore be reused across multiple inverse problems without retraining. Samples from the posterior distribution are obtained by approximately incorporating the measurement information through the forward map during sampling.

In infinite dimensions, the unconditional score function associated with the prior measure $\mu$ is defined analogously to \eqref{eq:conditional_score_infiniteD} by (see \cite{PidstrigachInfiniteDiffusion2024} for the detailed derivation)
\begin{equation}
\label{eq:unconditional_score_infiniteD}
       s(x, t; \mu) = \revise{ -\frac{1}{1-e^{-t}} }\left(x - e^{\frac{-t}{2}}\E(X_0 | X_t = x)\right),
\end{equation} 
where the diffusion process is initialized with $X_0\sim\mu$.
DPS, introduced in \cite{chung2023diffusion}, approximates the conditional score function by promoting data consistency with the observed measurements. Compared to the conditional score-based approach (Section \ref{subsec:ConditionalSBD}), DPS is attractive since the learned score function is independent of the measurement data. However, repeated evaluations of the forward map are required during sampling, leading to significantly higher computational cost. Moreover, since the conditioning is only approximate, exact fidelity with the posterior distribution is not guaranteed.
In DPS, the conditional score function is approximated by 
\begin{eqnarray}\label{eq: DPS}
    s(x, t; \mu^y) \approx s(x, t; \mu) - \rho \, \revise{\C}\nabla_x \norm{y - A\left(\E(X_0| X_t = x)\right)}_{\Gamma_{\text{obs}}}^2,
\end{eqnarray}
where $\rho \in \R$ is a hyper-parameter. Here $\E(X_0| X_t = x)$ denotes the conditional expectation of the process $X_t$ in \eqref{eq: diffusion model; initiated from posterior}, initialized with $X_0 \sim \mu$. This expectation can be represented as a function of the unconditional score function $s(x,t;\mu)$, see \eqref{eq:unconditional_score_infiniteD} and \cite{chung2023diffusion}. In practice, the gradient term in \eqref{eq: DPS} can be computed using automatic differentiation in PyTorch python package \cite{NEURIPS2019_pytorch}. 
For more details for the implementation, we refer to \cite{chung2023diffusion}.


\subsubsection{UCoS: an efficient diffusion posterior sampling}
\lam{One of the drawbacks of the conditional SBD is that learning the conditional score $s(x,t;\mu^y)$ in \eqref{eq:conditional_score_infiniteD} requires the neural network to map a high-dimensional joint input space. This dramatically increases the required sample complexity (the amount of data needed to train effectively). UCoS, a more computationally favourable strategy to evaluate $s(x,t;\mu^y)$, was recently developed in \cite{schneider2025scalablediffusionposteriorsampling}. The key idea behind UCoS is to express the conditional score as an affine transformation of an unconditional score function. This yields computational advantages over the conditional approach, since the learned score does not depend on measurement data $y$, simplifying the function approximation during training. Moreover, the underlying formulation is exact and avoids repeated evaluations of the forward map during sampling, leading to significantly faster inference compared to classical unconditional approaches such as DPS in large-scale problems.}

\lam{
Since UCoS forms the backbone of the regularized version developed in the next section, we summarize the derivation in slightly more detail below. At its core, UCoS introduces a modified diffusion process $\tilde{X}_t$, whose score coincides with the conditional score up to affine transformations that depend on the forward operator. Although this transformation initially still involves the forward map, a further reparameterization removes this dependence during sampling. As a result, only a single adjoint evaluation is required, and no further forward evaluations are needed during inference.} 
Introduce the notations
\begin{eqnarray*}
   \Sigma_t &=& (e^t -1)\C - (e^t-1)^2\C A^* C_t^{-1} A \C, \qquad C_t = (e^{t}-1) A \C A^* + \Gamma_{\text{obs}} \in \R^{m \times m}.
\end{eqnarray*}
Given these objects, define the stochastic process $\tilde X$ 
\begin{equation*}
    \widetilde X_t = \widetilde X_0 + Z_t, \quad \widetilde X_0 \sim \mu \; \text{and} \; Z_t \sim \N(0, \Sigma_t)
\end{equation*}
with corresponding score function
\begin{eqnarray*}
    \tilde{s}(z, t; \mu) = - \left(z -\E(\widetilde X^{\mu}_0 |\widetilde X^{\mu}_t = z)\right).  
\end{eqnarray*}
Let the affine transformation function $r$ of the score function $\tilde s$ under $R_t = \Sigma_t \C^{-1}$ be given by
\begin{eqnarray}\label{eq: r in UCoS}
    r(\eta, t; \mu): = \tilde{s}(R_t \eta, t; \mu) + R_t \eta.
\end{eqnarray}
\lam{The following theorem, established in \cite{schneider2025scalablediffusionposteriorsampling}, provides the key representation underlying efficient score estimation.}
\begin{theorem}
    Let Assumption \ref{ass: SBD assumption} hold.
    Then the conditional score function satisfies
    \begin{eqnarray}\label{eq: UCoS}
        s(x, t; \mu^y) = \lambda(t) \left(r(\xi_t(x;y), t; \mu) - \revise{e^{t/2}} x\right),
    \end{eqnarray}
    where
    \begin{eqnarray*}
    \xi_t(x,y) &=& \C A^* \revise{\Gamma_{\rm{obs}}^{-1}} y + \lambda(t) x, \qquad
    \lambda(t) = \left(e^{t/2}-e^{-t/2} \right)^{-1}.
    \end{eqnarray*}
\end{theorem}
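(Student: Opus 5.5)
The plan is to work directly from Definition \ref{def:conditional_score_infiniteD}. The only nontrivial object in $s(x,t;\mu^y)$ is the conditional mean $\E(X_0 \mid Y=y, X_t=x)$. I would show that this conditional mean coincides with $\E(\widetilde X_0 \mid \widetilde X_t = R_t\xi_t(x;y))$. The latter equals $\tilde s(R_t\xi_t,t;\mu) + R_t\xi_t = r(\xi_t,t;\mu)$ by the definition of $\tilde s$ and \eqref{eq: r in UCoS}. Substituting then gives
\begin{equation*}
s(x,t;\mu^y) = -\tfrac{1}{1-e^{-t}}\bigl(x - e^{-t/2} r(\xi_t,t;\mu)\bigr) = \tfrac{e^{-t/2}}{1-e^{-t}}\bigl(r(\xi_t,t;\mu) - e^{t/2}x\bigr),
\end{equation*}
and $e^{-t/2}/(1-e^{-t}) = (e^{t/2}-e^{-t/2})^{-1} = \lambda(t)$, which is \eqref{eq: UCoS}.

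For the identification of the conditional means I would first argue formally, as in finite dimensions. Conditionally on $X_0$, the data $y$ and $X_t$ are independent. Rescaling gives $e^{t/2}X_t = X_0 + W$ with $W\sim\N(0,(e^t-1)\C)$, so $X_0$ is observed through two Gaussian channels, $y = AX_0+\epsilon$ and $e^{t/2}x = X_0 + W$. As a function of $X_0$, the product of the two likelihoods is proportional to $\exp(-\tfrac12\|X_0 - m\|_{\Sigma}^2)$, where
\begin{equation*}
\Sigma^{-1} = \tfrac{1}{e^t-1}\C^{-1} + A^*\Gamma_{\rm obs}^{-1}A, \qquad m = \Sigma\bigl(\tfrac{e^{t/2}}{e^t-1}\C^{-1}x + A^*\Gamma_{\rm obs}^{-1}y\bigr).
\end{equation*}
The Woodbury identity yields $\Sigma = \Sigma_t$ with $C_t$ as defined. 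Also $\tfrac{e^{t/2}}{e^t-1} = \lambda(t)$, so $m = \Sigma_t\C^{-1}(\lambda(t)x + \C A^*\Gamma_{\rm obs}^{-1}y) = R_t\xi_t(x;y)$. The conditional law of $X_0$ given $(y,X_t=x)$ therefore has density with respect to $\mu$ proportional to $\exp(-\tfrac12\|X_0-R_t\xi_t\|_{\Sigma_t}^2)$. This is exactly the conditional law of $\widetilde X_0$ given $\widetilde X_t = R_t\xi_t$, since $\widetilde X_t = \widetilde X_0 + Z_t$ with $Z_t\sim\N(0,\Sigma_t)$ independent of $\widetilde X_0\sim\mu$. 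Hence the two conditional expectations coincide.

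The main obstacle is making this rigorous in infinite dimensions, because $\C^{-1}$ is unbounded and Lebesgue densities do not exist. I would follow the strategy of \cite{baldassari2023conditional, PidstrigachInfiniteDiffusion2024} and express everything through Radon--Nikodym derivatives of shifted Gaussian measures. Under Assumption \ref{ass: SBD assumption}, $\mu(H_\C)=1$. The Cameron--Martin theorem then gives $\frac{\dd\N(e^{-t/2}x_0,(1-e^{-t})\C)}{\dd\N(0,(1-e^{-t})\C)}$ explicitly for $x_0\in H_\C$. Combining this with the finite-dimensional likelihood of $y$ gives the conditional law via Bayes' formula on the joint measure.

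One point needs care: $\Sigma_t$ is a finite-rank perturbation of $(e^t-1)\C$, so $\N(0,\Sigma_t)$ and $\N(0,(e^t-1)\C)$ are equivalent by Feldman--H\'ajek, with the same Cameron--Martin space. With this in hand, the quadratic forms can be read as Cameron--Martin pairings, and $R_t\xi_t$ is well defined, since $R_t = \Sigma_t\C^{-1}$ acts as $(e^t-1)(I - (e^t-1)\C A^*C_t^{-1}A)$ and needs no inverse. The completed-square identity can then be checked term by term, exactly as in the formal computation.
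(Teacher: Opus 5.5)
This paper does not prove the statement; it imports it from \cite{schneider2025scalablediffusionposteriorsampling}, so there is no in-paper argument to compare against. Your proof is correct, and it is the Gaussian-conjugacy derivation that the definitions of $\Sigma_t$, $C_t$ and $R_t$ encode. You treat $y=AX_0+\epsilon$ and $e^{t/2}X_t=X_0+W$, with $W\sim\N(0,(e^t-1)\C)$, as conditionally independent Gaussian observations of $X_0$. Completing the square and applying Woodbury then produces exactly $\Sigma_t$ and the centre $R_t\xi_t(x;y)$. The identity $e^{-t/2}/(1-e^{-t})=\lambda(t)$ turns Definition~\ref{def:conditional_score_infiniteD} into \eqref{eq: UCoS}. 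The argument is also consistent with the training side of the construction: your precision operator gives $R_t^{-1}=\C\Sigma_t^{-1}=\frac{1}{e^t-1}I+\C A^*\Gamma_{\rm obs}^{-1}A$, which is precisely the operator in \eqref{eq: R_tx_t}.

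For the infinite-dimensional version, two bookkeeping steps should be written out rather than asserted.

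First, your term-by-term check amounts to the a.e.\ identity of measurable linear functionals $\langle\Sigma_t^{-1/2}x_0,\Sigma_t^{-1/2}R_tz\rangle=\langle\C^{-1/2}x_0,\C^{-1/2}z\rangle$ for $x_0\in H_\C$. It follows from $AR_t=(e^t-1)\Gamma_{\rm obs}C_t^{-1}A$, together with the fact that a measurable linear functional is determined a.e.\ by its restriction to the Cameron--Martin space.

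Second, $\E(\widetilde X_0\mid\widetilde X_t=\cdot)$ is only defined $\L(\widetilde X_t)$-a.e., so you must verify that the law of $R_t\xi_t(X_t;y)$ is absolutely continuous with respect to $\L(\widetilde X_t)$. This is not automatic, because in infinite dimensions $\N(0,a\C)\perp\N(0,b\C)$ for $a\neq b$. It holds because $\lambda(t)(e^t-1)=e^{t/2}$ gives $R_t\xi_t(x;y)=e^{t/2}x+\C A^*v$, with $v\in\R^m$ affine in $(Ax,y)$. The map $u\mapsto u+\C A^*v$ is invertible (it is essentially $R_t$), and a perturbation of the identity of this finite-rank, $H_\C$-valued kind preserves the equivalence class of $\N(0,(e^t-1)\C)$. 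Finally, $\L(e^{t/2}X_t\mid Y=y)\ll\N(0,(e^t-1)\C)\sim\N(0,\Sigma_t)\sim\L(\widetilde X_t)$, by your Feldman--H\'ajek remark and $\mu(H_\C)=1$.

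With these filled in, the proof is complete. Note that from Assumption~\ref{ass: SBD assumption} you only use $\mu(H_\C)=1$; the uniform score bound is needed for the reverse SDE, not for this identity.
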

\subsubsection*{Training in UCoS}
In order to train an approximation $r_\theta \approx r$, define score matching (SM) loss function
\begin{equation*}
     \mathrm{SM_{UCoS}}(r_\theta) := \E\left[\lambda(t)^2\norm{r(R_t^{-1} \tilde x_t,t ; \mu) -r_\theta(R_t^{-1} \tilde x_t, t; \mu)}_H^2\right].
\end{equation*}
\lam{Direct computation of \(r\) is infeasible in practice since the true score function is unknown.} To address this, the denoising score matching (DSM) \cite{Vincet2011AConnection} is introduced:
\begin{eqnarray}\label{eq: DSM}
    \mathrm{DSM_{UCoS}}(r_\theta) := {\mathbb E} \left[\lambda(t)^2 \left\|r_\theta(R_t^{-1} \tilde x_t, t; \mu) - \tilde x_0\right\|_H^2\right].
\end{eqnarray}
By \cite[Lemma 3.10]{schneider2025scalablediffusionposteriorsampling}, $\mathrm{SM}(r_\theta ) = \mathrm{DSM}(r_\theta) + V$, where $V$ is independent of $\theta$. Therefore $\mathrm{DSM}$ can be minimized with a known target. To efficiently sample $R_t^{-1} \tilde x_t$ without computing matrices corresponding to $A$ or their inversion, \lam{one can use the following identity (see \cite{schneider2025scalablediffusionposteriorsampling})}
\begin{eqnarray}\label{eq: R_tx_t}
    R_t^{-1} \tilde x_t = \left(\frac{1}{e^t-1}I + {\mathcal C} A^* \Gamma_{\text{obs}}^{-1}A\right) \tilde x_0 + \frac{1}{\sqrt{e^t-1}}{\mathcal C}^{1/2} z_1 + {\mathcal C} A^* \Gamma_{\text{obs}}^{-1/2} z_2,
\end{eqnarray}
in distribution, where $(\tilde x_0, \tilde x_t) \sim {\mathcal L}(\widetilde{X}_0, \widetilde{X}_t)$ and $z_1, z_2 \sim N(0, I)$ are independent. In Section \ref{sec: training and eval}, we give an algorithmic description of the training procedure which approximates the score function through $r_\theta$ in UCoS. Further the posterior sampling with UCoS is described algorithmically.

\section{Regularization based on Gaussian approximation}
\label{sec:reg-UCoS}

The authors of \cite{Baptista2025MemorizationAR} introduce Tikhonov-regularized score matching, a regularized variant of score matching that stabilizes the score approximation by adding a quadratic norm penalty to the loss function:
\begin{equation}
    \label{eq:Tik_reg_score_match}
     \mathrm{RSM}(s_\theta) := \E\left[\lambda(t)^2\left(\norm{s( x_t,t ; \mu^y) -s_\theta( x_t, t; \mu^y)}_H^2) + \norm{s_\theta(x_t, t)}_{\fabian{\Gamma_t^{-1}}}^2\right)\right].
\end{equation}
Here, $\norm{\cdot}_{\Gamma_t^{-1}}\fabian{:= \langle \cdot, \Gamma_t \cdot \rangle}$ stands for the $H$-norm being weighted by \fabian{an invertible}, linear time-dependent operator $\Gamma_t : H \to H$. A non-centered version of the approach in \eqref{eq:Tik_reg_score_match} is obtained by setting
\begin{equation*}
     \revise{\widetilde{\mathrm{RSM}}}(s_\theta) := \E\left[\lambda(t)^2\left( \norm{s(x_t,t;\mu^y)-s_\theta(x_t,t;\mu^y)}_H^2 +\norm{s_\theta(x_t,t)-s_0(x_t,t)}_{\fabian{\Gamma_t^{-1}}}^2
\right)\right].
\end{equation*}
where $s_0$ is a known (potentially conditional) score function corresponding to another probability measure. The straightforward motivation of using a non-centered regularizer $s_0$ is two-fold: first, centering the regularizer with a closed-form $s_0$ such as one emerging from a Gaussian can help to encode expert information into the conditional distribution which is not necessarily well-presented in training data. Second, the approach could be used to leverage or improve a score obtained earlier through another training data set e.g. with different resolution. 

The immediate counter-argument in the context of \emph{finite training data} is that the information encoded in $s_0$ is not propagated to $s_\theta$ outside the training data set. Therefore, the estimated score may miss specific expert information such as the tail behaviour. Moreover, the choice of $\Gamma_t$ should reflect the principle that the regularization strength should diminish as the amount of training data grows.

In our quest to better enforce the expert information in scenarios with limited training data, we make the following observation.
Modifying \cite[Theorem 5.1]{Baptista2025MemorizationAR}, it is straightforward to show that in the infinite data limit the Tikhonov regularized score matching objective is minimized by 
\begin{eqnarray*}
    s_{\text{Tik},0}(x, t; \mu^y) = \left( I + \Gamma_t \right)^{-1} s(x, t; \mu^y) + \left( I + \Gamma_t \right)^{-1} \Gamma_t s_0(x,t),
\end{eqnarray*}
where $s$ is the (true) score function. For $\alpha \in (0, 1)$, the choice $\Gamma_t \equiv\frac{\alpha}{1-\alpha} I $ yields
\begin{equation}
    \label{eq:sTik0}
     s_{\text{Tik},0}(x, t; \mu^y) = (1-\alpha) s(x, t;\mu^y) + \alpha \, s_0(x,t).
\end{equation}
Motivated by this identity, we utilize a regularized score function $\revise{\hat s_{\text{Tik}, \theta}(\cdot, \cdot; \mu^y, \nu^y)}$ of the form
\begin{eqnarray}\label{eq: reg score}
    \revise{\hat s_{\text{Tik}, \theta}(x, t; \mu^y, \nu^y)}:=
     (1-\alpha) s_\theta(x, t; \mu^y) + \alpha \, s(x,t;\nu^y),
\end{eqnarray}
where $\mu^y$ is the data-driven posterior distribution of interest and $\nu^y$ a closed-form posterior distribution, where the prior has desirable properties for the problem at hand and the corresponding score also has a closed-form. Moreover, $s_\theta$ is the data-driven score function obtained by UCoS. 

Consequently, unlike \cite{Baptista2025MemorizationAR}, we do not incorporate the regularization into the training procedure, but instead apply it directly to the trained score function during sampling. This decouples training from the choice of regularization parameter $\alpha$, so that different values of $\alpha$ can be explored without repeated retraining of the score approximation.

An immediate example of a well-motivated regularizing distribution is the `model-based' Gaussian posterior $\nu^y = \N(\bar x, \Gamma_{\text{post}})$ described in Section \ref{subsec: modelbased}. In the finite-dimensional setting, the score function is readily available as
\begin{eqnarray}\label{eq: Gaussian score}
    s(x, t; \nu^y) = \C\left((1-e^{-t})\C + e^{-t}\Gamma_{\text{post}})  \right) ^{-1}(e^{-t/2}\bar x-x).
\end{eqnarray}
In fact, the setting can also be generalized to the infinite-dimensional setting, see \cite[Lemma 3.8]{schneider2025scalablediffusionposteriorsampling}. \revise{However, the geometric mixture interpretation developed below, in particular Theorem \ref{thm: mixture scores}, is restricted to finite dimensions (i.e. $H=\R^n$), as in \cite{Baptista2025MemorizationAR}. A rigorous extension to infinite dimensions would require additional measure-theoretic assumptions on the two distributions and is left for future work.}

For convenience, we assume $\C = I$ in the remainder of this section. Recall that (see Lemma \ref{lem:score-finiteD}) the score function of a generic distribution $\mu$ with PDF $p_0$ simplifies to
\begin{eqnarray*}
    s(x, t; \mu) = \nabla_x \log \left( p_0 \ast k_t\right)(x),
\end{eqnarray*}
where the convolution kernel $k_t$ is given by \eqref{eq: convolution kernel}. Therefore, it follows that, when $\nu_1$ and $\nu_2$ admit regular PDFs $p_1, p_2$ on \revise{$\R^n$},  
    \begin{eqnarray}   
        \label{eq:score_identity}
          \alpha s(x, t; \nu_1) + (1-\alpha) s(x, t; \nu_2)
        & = & \alpha \nabla \log (p_1 \ast k_t) + (1-\alpha) \nabla \log (p_2 \ast k_t) \nonumber \\
        & = & \nabla \log \left[(p_1\ast k_t)^{\alpha} (p_2 \ast k_t)^{1-\alpha} \right].
    \end{eqnarray}
It is now natural to ask whether the right-hand side of identity \eqref{eq:score_identity} approximates any score function in relevance to the diffusion process \eqref{eq: diffusion model; initiated from posterior}. We demonstrate by the next theorem that indeed as $t$ decreases, the score on the right-hand side of \eqref{eq:score_identity} approximates locally the score of a density $(q_\alpha/\int q_\alpha(x)dx) \ast k_t$, where
\begin{eqnarray}\label{eq: pdf of mixture}
    q_\alpha(x) := p_1^{\alpha}(x)p_2^{1-\alpha}(x), \quad x \in \revise{\R^n},
\end{eqnarray}
is the unnormalized geometric mixture of $p_1$ and $p_2$.
\lam{
\begin{remark}
    We observe that the modes of the geometric mixture $q_\alpha$ `interpolate' between the modes of the underlying distributions $p_1, p_2$. To illustrate this behaviour, suppose that $p_1$ is the density of a symmetric mixture of two Gaussian distributions with common variance $\sigma_1^2$ and isolated modes $m_1^-$ and $m_1^+$, while $p_2$ is a Gaussian density with variance $\sigma_2^2>\sigma_1^2$ and mode $m_2 = \tfrac{m_1^-+m_1^+}{2}$. Then, for $\alpha$ close to 1, the two modes of the geometric mixture $q_\alpha$ lie between $m_1^-$ and $m_2$, and between $m_2$ and $m_1^+$, respectively. Moreover, as $\alpha \to 1$, these modes approach $m_1^-$ and $m_1^+$, whereas as $\alpha \to 0$, they approach $m_2$; see \cite[Section 3.3]{ChopinNicolas2020Aits}. This behaviour
    contrasts with that of arithmetic mixtures $\alpha p_1 + (1-\alpha)p_2$. In the example illustrated in Figure~\ref{fig:mixtures}, the arithmetic mixture mainly re-weights the component densities while preserving all three visible modal structures. 
    In contrast, the geometric mixture shifts the two dominant modes toward one another rather than preserving a distinct central mode. See also the discussion in \cite{Pettigrew2026Geometric}.
\end{remark}
}
\begin{figure}[ht]
    \centering
    \begin{subfigure}{0.45\textwidth}
        \centering
        \includegraphics[width=\linewidth]{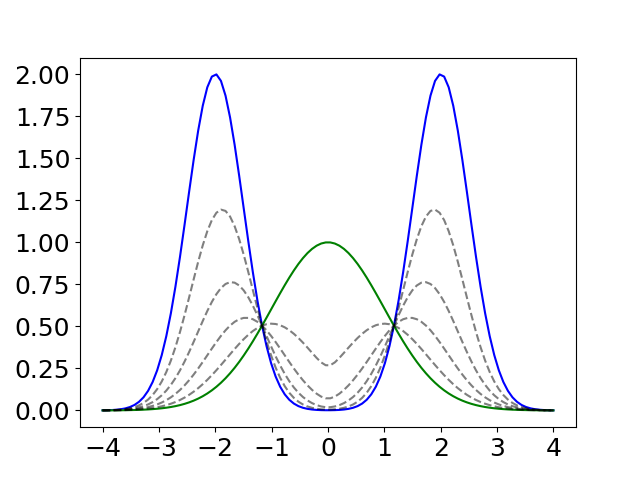}
    \end{subfigure}
    \begin{subfigure}{0.45\textwidth}
        \centering
        \includegraphics[width=\linewidth]{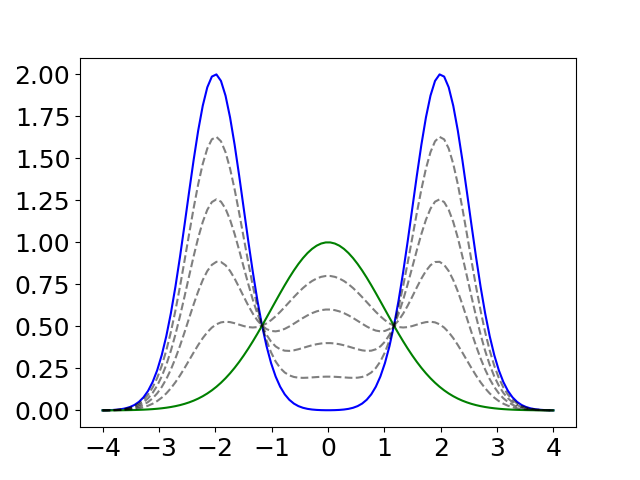}
    \end{subfigure}
    \caption{Comparison of geometric mixture (left) and arithmetic mixture (right). Blue: density of the bimodal distribution $0.5\N(-2, 0.5^2) + 0.5\N(2, 0.5^2)$ with modes $m_1^- = -2, m_1^+ = 2$, green: Gaussian density $\N(0,0.8^2)$ with mode $m_2 = 0$. Dashed lines: geometric/arithmetic mixtures corresponding to $\alpha = 0.2, 0.4, 0.6, 0.8$.}
    \label{fig:mixtures}
\end{figure}
\begin{theorem}\label{thm: mixture scores}
    Assume $p_i \in C^3_b(\revise{\R^n})$ such that $Lp_i, L q_\alpha \in L^\infty(\revise{\R^n})$, and 
    \revise{$L \nabla p_i, L \nabla q_\alpha  \in L^\infty(\R^n, \R^n)$}, $i=1,2$, where the operator $L$ is defined as
    \begin{eqnarray}\label{eq:L-operator}
            L: C^3_b(\R^d) \rightarrow C(\R^d), \quad u \mapsto \frac{1}{2} \left( \Delta u + \nabla \cdot(x u) \right).
        \end{eqnarray}
    Moreover, suppose $\Omega \subset \revise{\R^n}$ is a bounded open set and $p_1(x), p_2(x) \geq m_\Omega$ for any $x\in \Omega$.
    Then there exists a constant $C >0$ and $T>0$ such that for \revise{every $x\in \Omega$ and $0<t < T$},
    \begin{eqnarray} \label{eq:score-mixture}
        \left|\fabian{\nabla_x} \log \left[(p_1\ast k_t)^{\alpha}(x) (p_2 \ast k_t)^{1-\alpha}(x)\right]  - \fabian{\nabla_x} \log \left[(q_\alpha \ast k_t)(x)\right]\right| \le C t,
    \end{eqnarray}
    where the kernel $k_t$ is defined in \eqref{eq: convolution kernel}.
\end{theorem}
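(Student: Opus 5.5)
The plan is to read $u_t := u\ast k_t$ (with $\C=I$) as the solution at time $t$ of the Ornstein--Uhlenbeck Fokker--Planck equation. Write $u_t(x)=\int k_t(x;z)u(z)\,dz$. The density of $X_t$ with $X_0\sim u$ (when $u$ is a probability density) satisfies $\partial_t u_t = L u_t$, with $L$ as in \eqref{eq:L-operator}. By linearity this holds for any sufficiently regular nonnegative $u$. Hence, for $u\in\{p_1,p_2,q_\alpha\}$, Taylor's theorem in $t$ should give
\[
u_t(x)=u(x)+\int_0^t (L u_s)(x)\,ds ,\qquad \nabla u_t(x)=\nabla u(x)+\int_0^t (L\nabla u_s)(x)\,ds + (\text{commutator terms}).
\]
To make this uniform, I would use that the OU semigroup $P_t$ acting on densities satisfies $\partial_t P_t u = P_t L u$. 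It is a contraction in $L^\infty$ up to the factor $e^{nt/2}$ coming from the mass-preserving dilation. This gives
\[
\|u_t-u\|_\infty\le C t\,\|Lu\|_\infty .
\]
For the gradient, note that $\nabla(P_t u)=e^{t/2}P_t(\nabla u)$ up to an explicit dilation factor; I would verify this by differentiating the kernel. This yields
\[
\|\nabla u_t-\nabla u\|_\infty\le C t\,\big(\|L\nabla u\|_\infty+\|\nabla u\|_\infty\big).
\]
These two estimates are where the hypotheses $Lp_i, Lq_\alpha\in L^\infty$ and $L\nabla p_i, L\nabla q_\alpha\in L^\infty$ enter. Note that $C^3_b$ bounds $\nabla u$, and $q_\alpha$ is $C^1$ with bounded gradient on $\Omega$, where $p_i\ge m_\Omega>0$.

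Next comes the positivity step. On $\Omega$ we have $p_i\ge m_\Omega$, and therefore $q_\alpha\ge m_\Omega$. By the sup-norm estimate, choosing $T$ with $CT\|Lu\|_\infty\le m_\Omega/2$ gives $u_t\ge m_\Omega/2$ on $\Omega$ for all $t<T$ and all three functions. Consequently the map $(a,b)\mapsto b/a$ is Lipschitz on the relevant range, and
\[
\Big|\frac{\nabla u_t}{u_t}-\frac{\nabla u}{u}\Big|\le C' t \quad\text{on }\Omega .
\]

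To conclude, the left-hand side of \eqref{eq:score-mixture} equals
\[
\alpha\frac{\nabla p_{1,t}}{p_{1,t}}+(1-\alpha)\frac{\nabla p_{2,t}}{p_{2,t}}-\frac{\nabla q_{\alpha,t}}{q_{\alpha,t}} .
\]
At $t=0$ this vanishes exactly, since on $\Omega$ we have $\nabla\log q_\alpha=\alpha\nabla\log p_1+(1-\alpha)\nabla\log p_2$. Inserting the three $O(t)$ estimates via the triangle inequality then gives the bound $Ct$. The normalization of $q_\alpha$ is irrelevant, because it cancels in the logarithmic gradient.

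I expect the main obstacle to be the gradient estimate: justifying $\partial_t\nabla u_t$ and bounding it uniformly by $L\nabla u$ plus lower-order terms, including the commutator $[\nabla,L]u=\tfrac12\nabla u$. A second subtlety is that $q_\alpha$ need not be globally $C^3$ where $p_i$ vanish. I would first try the semigroup identity $\nabla P_t = e^{t/2}P_t\nabla$, which handles the commutator cleanly, and rely on the assumed $L^\infty$ bounds for $q_\alpha$ in place of its global smoothness.
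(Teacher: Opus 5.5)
Your proposal is correct and is essentially the paper's argument. The paper likewise proves a single-density estimate (Lemma \ref{lem:rate_pt}) via the semigroup bound $\|S(t)u-u\|_\infty\le te^{tn/2}\|Lu\|_\infty$, the commutation $\nabla S(t)u=e^{t/2}S(t)\nabla u$ for the gradient, and a lower bound $m_\Omega/2$ on $\Omega$ for small $t$; it then concludes by the triangle inequality using $\nabla\log q_\alpha=\alpha\nabla\log p_1+(1-\alpha)\nabla\log p_2$, applying that estimate to $p_1$, $p_2$ and $q_\alpha$ (your explicit remarks that $q_\alpha\ge m_\Omega$ on $\Omega$ and how to choose $T$ fill in details the paper leaves implicit).
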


\revise{We note that the constant $C$ in \eqref{eq:score-mixture} depends on the dimension $d$, and the estimate is therefore not uniform under the refinement of the discretization. As a result, Theorem \ref{thm: mixture scores} does not establish discretization invariance of either the score mixture approximation or an empirically selected value of $\alpha$, which we leave to future work.}

\begin{remark}
    Assume the densities $p_i \in C_b^3$ are sub-exponential. More precisely, assume they satisfy $p_i(x) = \exp\left(-r_i(x)\right)$ for $i = \{1, 2\}$, where $\fabian{r_i \in C^3(\R^n)}$ \revise{with derivatives of order 2 and 3 uniformly bounded} and
    \begin{eqnarray*}
        r_i(x) \ge \revise{ c_x\norm{x} - c}, \quad \fabian{c_x> 0, c \ge 0}.
    \end{eqnarray*}
    Then the assumptions of Theorem \ref{thm: mixture scores} are satisfied.
\end{remark}
To prove Theorem \ref{thm: mixture scores}, we will need the following lemma, \lam{which provides a small-time estimate for the score of a smoothed density.}
\begin{lemma}\label{lem:rate_pt}
    Let $p$ satisfy the assumptions of Theorem \ref{thm: mixture scores}. Then there exists a constant $C>0$ and $T>0$ such that for \revise{every $x\in \Omega$ and} $0\leq t\leq T$, it holds that
    \begin{equation}
        \label{eq:pkt_converges_linearly}
        \left|\fabian{\nabla_x}\log (p \ast k_t) - \fabian{\nabla_x} \log p \right| \le C t.
    \end{equation}
\end{lemma}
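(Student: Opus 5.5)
The plan is to read $p_t := p\ast k_t$ (with $\C=I$) as the density at time $t$ of the Ornstein--Uhlenbeck process \eqref{eq: diffusion model; initiated from posterior} started from $p$. Then $p_t$ solves the Fokker--Planck equation
\begin{equation*}
\partial_t p_t = \tfrac12\left(\Delta p_t + \nabla\cdot(x p_t)\right) = L p_t, \qquad p_0 = p .
\end{equation*}
Equivalently, $p_t = P_t^* p$, where $P_t^*$ is the adjoint OU semigroup with generator $L$. Because $L$ commutes with this semigroup, Duhamel's formula gives
\begin{equation*}
p_t - p = \int_0^t P_s^*(Lp)\,\dd s .
\end{equation*}
The same holds for gradients. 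Since $\nabla P^*_s u = e^{s/2} P^*_s \nabla u$ (up to the constants dictated by the kernel), we get $\nabla p_t - \nabla p = \int_0^t \nabla P_s^*(Lp)\,\dd s$, and this can be rewritten in terms of $P^*_s(L\nabla p)$ plus a term proportional to $P^*_s\nabla p$.

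Next, use that $P_s^*$ is a contraction on $L^\infty$ up to the factor $e^{ns/2}$ coming from the Jacobian of the rescaling $x\mapsto e^{-s/2}z$. For $s\le T$ this factor is bounded by a constant depending on $n$ and $T$. The hypotheses $Lp\in L^\infty$ and $L\nabla p\in L^\infty$, together with $\nabla p$ bounded (since $p\in C^3_b$), then give
\begin{equation*}
\|p_t - p\|_\infty \le C_1 t, \qquad \|\nabla p_t - \nabla p\|_\infty \le C_2 t, \qquad 0\le t\le T .
\end{equation*}
If the semigroup commutation is awkward, an alternative is to differentiate $\nabla p_t$ in time directly. Since $\partial_t \nabla p_t = \nabla L p_t$, and $\nabla L u = L\nabla u + \tfrac12 \nabla u$ componentwise, the same $L^\infty$ bound follows.

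It then remains to pass to the logarithmic gradient on $\Omega$. Choose $T$ with $C_1 T \le m_\Omega/2$, so that $p_t \ge m_\Omega/2$ on $\Omega$ for $t\le T$. Then write
\begin{equation*}
\frac{\nabla p_t}{p_t} - \frac{\nabla p}{p} = \frac{\nabla p_t - \nabla p}{p_t} + \nabla p\,\frac{p - p_t}{p_t\,p}.
\end{equation*}
The first term is at most $2C_2 t/m_\Omega$. The second is at most $\|\nabla p\|_\infty\, 2C_1 t/m_\Omega^2$. Together these give \eqref{eq:pkt_converges_linearly}, with a constant depending on $n$, $T$, $m_\Omega$ and the norms of $p$, $\nabla p$, $Lp$ and $L\nabla p$.

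The main obstacle is the second step: justifying the Duhamel representation for the gradient and verifying that $\nabla$ and the semigroup commute (up to the drift-induced factor) under only the stated regularity. That is where the hypothesis $L\nabla p\in L^\infty$ must enter, and some care is needed with differentiation under the integral. The $C^3_b$ assumption on $p$, together with the Gaussian decay of $k_t$, should suffice to justify exchanging derivatives with the convolution.
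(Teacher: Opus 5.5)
Your proposal is correct and follows essentially the same route as the paper: the Fokker--Planck/semigroup representation of $p\ast k_t$ with the $e^{tn/2}$ growth bound on $L^\infty$, and the Duhamel estimate applied to $p$ and, via the identity $\nabla(p\ast k_t)=e^{t/2}(\nabla p)\ast k_t$, to $\nabla p$ using $Lp, L\nabla p\in L^\infty$. The final step is also the same: the splitting of the log-gradient difference together with the lower bound $p\ast k_t\ge m_\Omega/2$ on $\Omega$ for small $t$, where your explicit choice of $T$ via $C_1T\le m_\Omega/2$ makes precise a step the paper states without detail.
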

\begin{proof}
    \revise{We write $\nabla$ for $\nabla_x$}. First notice that $w(t):= p \ast k_t$ satisfies the Fokker-Planck-Kolmogorov equation \cite[Section 5]{Särkkä_Solin_2019},
        \begin{eqnarray*}
            \begin{cases}
                \partial_t \, w(t) = L w(t) \\
                w(0) = p,
            \end{cases}
        \end{eqnarray*}
    where the operator $L$ is defined by \eqref{eq:L-operator}. Thus $L$ is the generator of the semigroup $S(t):  C(\revise{\R^n}) \rightarrow C(\revise{\R^n})$ defined as $p \mapsto p \ast k_t$ for $t \in [0, \infty)$ (see, e.g., \cite[Section 7.4]{EvansLawrenceC2022Pde}). We note moreover that        \begin{equation}\label{eq:contraction_S(t)}
            \|S(t)p\|_\infty \le \|p\|_\infty \int_{\R^d}k_t(x; z) \fabian{dz} = \fabian{e^{tn/2}}\|p\|_\infty.
        \end{equation}
    Now for any $p$ in the domain of $L$, thanks to the differential property of semigroups (\cite[Section 7.4, Theorem 1]{EvansLawrenceC2022Pde}) and \eqref{eq:contraction_S(t)},
        \begin{equation}\label{eq:conv_in_t_of_S}
            \|S(t)p - p\|_\infty \le \int_0^t \|L S(s) p\|_\infty ds = \int_0^t \|S(s) L p\|_\infty ds \le t\fabian{e^{tn/2}}\|L p\|_\infty.
        \end{equation}
    By applying \eqref{eq:conv_in_t_of_S} for $\nabla p$ we obtain
    \begin{equation}  \label{eq:conv_in_t_nabS}
        \|S(t)\nabla p - \nabla p\|_\infty \le t \fabian{e^{tn/2}} \|L \nabla p\|_\infty.
    \end{equation}
   \revise{Notice that $\nabla (S(t)p) = e^{t/2}S(t)\nabla p$, we have
   \begin{equation} \label{eq:grad_cov_in_t}
       \begin{aligned}
           \|\nabla(S(t)p) - \nabla p\|_\infty & \le e^{t/2}\|S(t)\nabla p - \nabla p\|_\infty + (e^{t/2} - 1)\|\nabla p\|_\infty \\
           & \le t e^{t(n+1)/2} \|L \nabla p\|_\infty + \frac{1}{2}t e^{t/2} \|\nabla p\|_\infty,
       \end{aligned} 
   \end{equation}
   thanks to \eqref{eq:conv_in_t_nabS} and the fact that $e^x - 1 \le xe^x$ for $x>0$.}
   Now \revise{write},
    \begin{equation*} 
        |\nabla \log (p\ast k_t) - \nabla \log p| \le \frac{|\nabla (p\ast k_t) - \nabla p|}{p\ast k_t} + |\nabla p|\frac{|p\ast k_t - p|}{(p\ast k_t)p}.
    \end{equation*}
    and we observe that there exists $T>0$ such that on $\Omega$ the convolution $p\ast k_t$ is bounded from below by $m_\Omega/2$. Therefore the inequality \eqref{eq:pkt_converges_linearly} follows for some $C>0$ by combining \eqref{eq:conv_in_t_of_S} \revise{and \eqref{eq:grad_cov_in_t}}.
\end{proof}

\begin{proof} [Proof of Theorem \ref{thm: mixture scores}]
For $q_\alpha = p_1^\alpha p_2^{1-\alpha}$, we have that
\begin{equation*}
    \nabla\log q_\alpha = \alpha\nabla\log p_1 + (1-\alpha)\nabla\log p_2. 
\end{equation*}
Therefore, \lam{if we denote $\mathcal{I}:= |\nabla \log \left[(p_1\ast k_t)^{\alpha} (p_2 \ast k_t)^{1-\alpha}\right] - \nabla \log (q_\alpha \ast k_t)|$, then it holds that
\begin{align*}
     \mathcal{I} & \le |\alpha \nabla \log (p_1 \ast k_t) + (1 - \alpha)\nabla \log (p_2 \ast k_t) - \nabla\log q_\alpha| + |\nabla \log (q_\alpha \ast k_t) - \nabla \log q_\alpha| \\
    & \le \alpha |\nabla\log(p_1\ast k_t) - \nabla\log p_1| + (1 - \alpha) |\nabla\log(p_2\ast k_t) - \nabla\log p_2| \\ 
    & \qquad + |\nabla \log (q_\alpha \ast k_t) - \nabla \log q_\alpha|\\
    &:= \mathcal{I}_1 + \mathcal{I}_2 +  \mathcal{I}_3.
\end{align*} 
The result follows by applying Lemma \ref{lem:rate_pt} for each of the three terms $ \mathcal{I}_1,  \mathcal{I}_2 $ and $\mathcal{I}_3$ with $p$ being $p_1, p_2$ and $q_\alpha$, respectively.}
\end{proof}

\lam{
As a final remark, the result of Theorem \ref{thm: mixture scores} provides a theoretical interpretation of the proposed regularized score function, defined in \eqref{eq: reg score}. In the small diffusion-time regime, the regularized score locally approximates the score of a geometrically interpolated distribution between the data-driven posterior and the model-based posterior. This motivates the use of the regularized score in scenarios such as when the learned score function may be sensitive to limited training data or modelling errors. In the following sections, we investigate the practical performance of the proposed method, in comparison with other methods introduced in Section \ref{sec:preliminaries}, in simulated and experimental diffuse optical tomography problems.
}

\section{Simulations}
\subsection{Data generation}\label{sec:datagen}

In the numerical studies, the domain $\Omega \subset \mathbb{R}^2$ was a circle. We considered three simulation setups as shown in Fig.~\ref{fig:2dmesh}, and listed below:
\begin{itemize}[noitemsep]
    \item \textbf{Full-view setup:} A circular domain of radius $25\:\mm$, with $20$ sources and $20$ detectors modelled as $1\:\mm$ wide surface patches located at equi-spaced angular intervals on the boundary.
    \item \textbf{Limited-view setup:} A circular domain of radius $25\:\mm$, with $10$ sources and $10$ detectors modelled as $1\:\mm$ wide surface patches placed at equally spaced angular intervals over a $180^\circ$ boundary segment.
    \item \textbf{Simulated experimental setup:} A circular domain of radius $40\:\mm$, with $16$ sources and $16$ detectors, modelled as $8\:\mm$ and $0.6\:\mm$ wide surface patches respectively, at equally spaced angular intervals on the boundary. This simulation geometry mimics a \revise{2D slice} of the experimental setup, described later in Section \ref{sec:expt}.
\end{itemize}
\begin{figure}[tb!]
\centering
\includegraphics[scale=0.65,trim={0mm 0mm 0mm 0mm},clip]{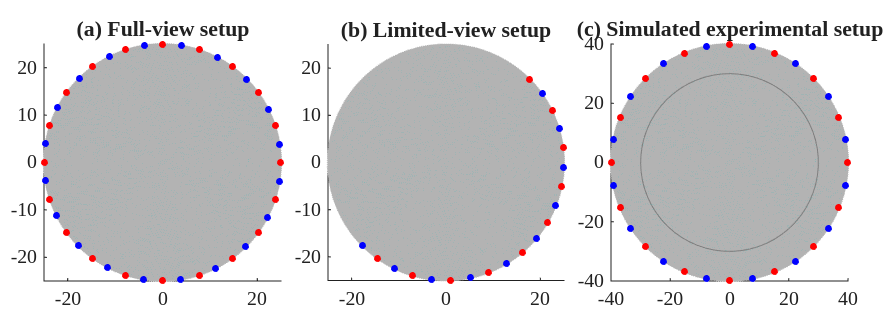}
\caption{2D simulation domains. Locations of the sources and detectors are shown as red and blue dots ($\cdot$) on the domain boundaries. Axes units are in millimetres. 
\label{fig:2dmesh}}
\end{figure}
For the full-view and limited-view configurations, the background optical parameters were set to $\mua$ = 0.01 $\mm^{-1}$ and $\mus$ = 1 $\mm^{-1}$, with an angular modulation frequency of $\omega = 100$ MHz. In the simulated experimental setup, corresponding to the time-domain DOT system (described later in Section \ref{sec:expt}), the background optical parameters were $\mua$ = 0.0065 $\mm^{-1}$ and $\mus$ = 0.95 $\mm^{-1}$ within a 30 mm radius, and $\mua$ = 0.01 $\mm^{-1}$ and $\mus$ = 0.8 $\mm^{-1}$ in the outer region. The modulation frequency was $\omega = 56.98$ MHz, corresponding to the Fourier-transformed time-domain measurements obtained from the experimental system.

The linear operators $A$ were computed for each setup using a finite element (FE) approximation of the solution to Eq. \eqref{modeldiffnorml}. The FE approximation employed the standard Galerkin method, as implemented using the Toast++ software \cite{toast}. While solving the forward problem to generate the measurements, the operator 
$A$ was computed on a 2D mesh with 12853 nodes and 25326 triangular elements. For the inverse problem, $A$ was instead computed on a mesh with 11329 nodes and 22302 triangular elements to avoid an inverse crime. 
The unknown parameters were represented as 32$\times$32 pixel images.
The random measurement noise $\epsilon$, in Eq.~\eqref{eq:ip}, was assumed to be an additive zero-mean Gaussian distribution $\N(0, \Gamma_{\mathrm{obs}})$. 
Assume further that $\Gamma_{\mathrm{obs}}$ is diagonal with standard deviation $\sigma=0.05$ for logarithm of amplitude and $\sigma=0.001$ for phase; this corresponded to roughly $1\%$ of the mean data values. While solving the inverse problem, the noise mean and covariance were assumed to be known.

\subsection{Training and evaluation}\label{sec: training and eval} 

\subsubsection*{Training}
The training dataset $(x_i)_{i = 1}^{N}$ is given by $N= 10,000$ images with 2 independent channels corresponding to absorption and scattering coefficients. Each channel has a random number of one to three circular inclusions with random position and value. The inclusion radii were drawn from a uniform distribution $U_{(0,10)}$\,mm, and their \revise{contrasts} were sampled from a uniform distribution $U_{(0,1)}$. \fabian{To augment the given training data, we rotate both channels independently by random angles $\in [0, 2\pi]$ during training.}
Training can be implemented using Algorithm~\ref{alg: training}, which is based on \eqref{eq: DSM} and \eqref{eq: R_tx_t}. 
\begin{algorithm}
\caption{Training a score approximation $r_\theta$}
\label{alg: training}
\begin{algorithmic}[1]
\State \textbf{Input:} Neural network $r_\theta$, training data $\mu =\frac{1}{N}\sum_{i=1}^N\delta_{x_i}$
\State \textbf{Output:} Trained score function $r_\theta$
\While{Training}
\State Sample $\tilde x_0 \sim \mu, t \sim U_{[\varepsilon, 1]}$ \Comment{$U_{[\varepsilon, 1]}$ is uniform distribution on $[\varepsilon, 1]$}
\State Sample $z_1, z_2 \sim \N(0, I)$
\State $R_t^{-1} \tilde x_t  \gets \left(\frac{1}{e^t-1}I + {\mathcal C} A^* \Gamma_{\text{obs}}^{-1}A\right) \tilde x_0 + \frac{1}{\sqrt{e^t-1}}{\mathcal C}^{1/2} z_1 + {\mathcal C} A^* \Gamma_{\text{obs}}^{-1/2} z_2$
\State Perform minimization step on  $  \revise{\lambda(t)^2}\left\|r_\theta(R_t^{-1} \tilde x_t, t; \mu) - \tilde x_0\right\|^2$.
\EndWhile
\end{algorithmic}
\end{algorithm}

The model is trained for 20 epochs using the AdamW \cite{Loshchilov2017DecoupledWD} optimizer, a modification of the Adam optimizer, which integrates weight decay \revise{into} the gradient. Learning rate is chosen in range from $0.002$ to $0.0005$, decaying linearly through the epochs. The truncation hyper-parameter $\varepsilon$ in \fabian{Algorithm} \ref{alg: training} is set to $\varepsilon = 0.001$. 
The operator $r_\theta$ is parametrized as a Fourier-Neural Operator \cite{li2021fourier}. 
The architecture is based on \cite{baldassari2023conditional} and consists of three main components. First, a fully connected lifting layer maps the input channels, corresponding to the unknown $x$, time $t$ and, for the conditional approach, measurement $y$, to a $32$-dimensional lifted space. This is followed by $10$ Fourier neural operator layers with $32$ nodes and $17$ Fourier modes,
and finally two fully connected layers with nonlinear activations that map back to the original resolution.
On an NVIDIA A100 GPU with 80\,GiB of memory, one epoch takes approximately 40 seconds, with an actual memory usage of less than 5\,GiB during training.
Furthermore, similar to \cite{baldassari2023conditional, schneider2025scalablediffusionposteriorsampling}, a time change $\beta(t) = 0.05 + t(10 - 0.05)$ in the forward and backward SDE is employed. \fabian{Figure~\ref{fig:data_aug} shows that, when data augmentation is applied, the validation loss decreases proportionally the training loss. The validation loss is even lower than the training loss, since the random rotations may reduce the sharpness of the inclusions, leading to more complex images. This contrasts with the non-augmented setting, where the validation loss is consistently higher than the training loss, reaching a plateau whilst the training loss is further decreasing, indicating overfitting to the training data.}
\begin{figure}[htbp]
    \centering
    \begin{subfigure}{0.3\textwidth}
        \centering
        \includegraphics[width=\linewidth]{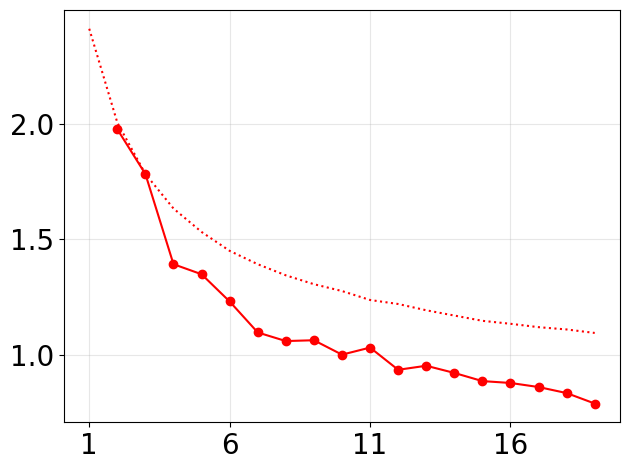}
    \end{subfigure}
    \hspace{0.03\textwidth}
    \begin{subfigure}{0.3\textwidth}
        \centering
        \includegraphics[width=\linewidth]{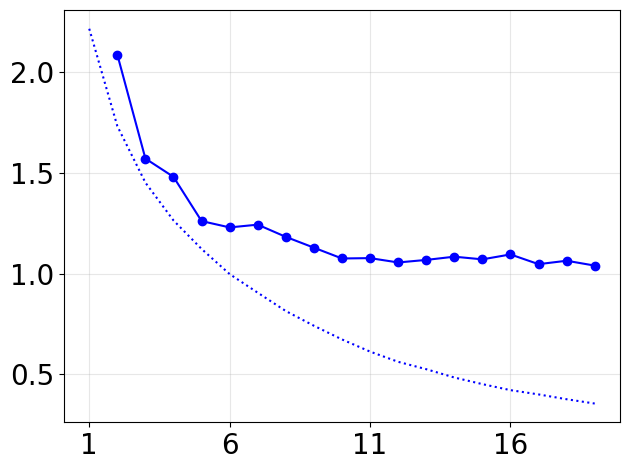}
    \end{subfigure}
    \caption{\fabian{Training loss (dotted line) and validation loss (connected line) both without data augmentation (left) and with data augmentation (right) through random rotations. X-axis: epoch, Y-axis: loss.}}
    \label{fig:data_aug}
\end{figure}

\subsubsection*{Sampling}
To sample from the posterior distribution,  the reverse SDE \eqref{eq: BSDE} under utilization of Eq.~\eqref{eq: UCoS} is discretized in time though a reverse time Euler-Maruyama method, see \cite{song2022solving, Särkkä_Solin_2019} for details. Algorithm~\ref{alg: sampling} gives more details on the precise sampling approach including the use of Euler-Maruyama method for UCoS.
\begin{algorithm}
\caption{\revise{Regularized }UCoS posterior sampling}
\label{alg: sampling}
\begin{algorithmic}[1]
\State \textbf{Input:} Score approximation $r_\theta$, measurement data $y$, time steps $T = t_k > t_{k-1} > \cdots > t_1 > t_0 = \varepsilon$, \revise{regularization strength $\alpha$}
\State \textbf{Output:} \revise{Regularized posterior sample $x_0$} 
\State $\gamma \gets \C A^* \Gamma_{\text{obs}}^{-1} y$
\State $x_{\revise{k}} \sim \N(0, \C)$
\State Precompute $\bar{x}, \revise{\Gamma_{\rm{post}}}$ given by \eqref{eq: gaussian posterior} 
\State $\Delta t \gets \revise{\frac{T-\varepsilon}{k}}, $
\For{$\revise{i = k, \cdots, 1}$} 
    \State $z_i \sim \N(0, I)$
    \State $\xi_i \gets \gamma + \lambda(t_i) \revise{x_{i}}$
    \State $s(x_i, \revise{t_i}; \mu^y) \gets \lambda(t_i) \left( r_\theta(\xi_i, \revise{t_i}; \mu) -e^{\revise{t_i} /2 } \revise{x_{i}}\right)$
    \State $s(x_i, \revise{t_i}; \nu^y) \gets  \C\left((1-e^{-\revise{t_i}})\C + e^{-\revise{t_i}}\revise{\Gamma_{\rm post}})  \right) ^{-1}(e^{-\revise{t_i}/2}\bar{x}-\revise{x_{i}})$
    \State $s_i \gets (1-\alpha) s(x_i, \revise{t_i}; \mu^y) + \alpha s(x_i, \revise{t_i}; \nu^y)$ 
    \State $\revise{x_{i-1}} \gets \fabian{x_{i}+\left(\tfrac{x_{i}}{2} + s_i\right)} \Delta t + \C^{1/2} (\sqrt{\Delta t}) z_i$
\EndFor
\end{algorithmic}
\end{algorithm}
For sampling, set $\varepsilon = 0.005$, $T = 1$ and use $k = 500$ time-steps in the reverse SDE. \fabian{For DPS, the hyper-parameter $\rho$ is selected via grid search over $10$ values, spanning from $\rho = 0.1$ to $\rho = 500$ in a roughly logarithmic scale on the limited-view problem with non-circular inclusions. The chosen value $\rho =100$ optimizes both PSNR and SSIM simultaneously and is then utilized across all experiments}.
On an NVIDIA A100 GPU, generating $1{,}000$ samples with and without regularization in parallel amounts to approximately $46$ and $41$ seconds, respectively, compared to $44$ seconds for DPS. For the numerical demonstrations, we set $\alpha = 0.5$ for regularized UCoS, based on the analysis in Section~\ref{sec: influence of alpha}.

\subsection{Results}
In this section, we generate posterior samples based on simulated measurements for the linearized DOT problem.
We compare the performance of UCoS, regularized UCoS, DPS \cite{chung2023diffusion}, and a posterior derived from a Gaussian Ornstein–Uhlenbeck process prior as outlined in Section \ref{subsec: modelbased}. The targets used for these experiments are two circular inclusions and an out-of-distribution target of two non-circular inclusions. Figure~\ref{fig:inclusion 1 absorption_scattering} corresponds to the full-view setting and Figure~\ref{fig:inclusion 2 absorption_scattering} to the limited view setup. In Table~\ref{tab: sim DOT}, we additionally report the mean squared error (MSE), peak signal-to-noise ratio (PSNR), structural similarity index (SSIM), and coverage defined as the proportion of true values falling within $1.96$ empirical standard 
deviations of the posterior mean.

We first examine the full-view results in Figure~\ref{fig:inclusion 1 absorption_scattering}. For both circular and non-circular inclusions, Gaussian posterior samples recover the scattering coefficient with moderate noise, while individual absorption samples are strongly corrupted and only capture the inclusions in terms of their mean. DPS captures the inclusions well on average but frequently introduces spurious or missing inclusions across samples. In contrast, both UCoS and regularized UCoS accurately recover the location and amplitude of the inclusions, with low variance concentrated around the inclusion regions. The visual differences between the two UCoS variants are minor.

We next consider the limited-view setting in Figure~\ref{fig:inclusion 2 absorption_scattering}, where the inverse problem is even more ill-posed. Here, the Gaussian posterior exhibits very high variance, and individual samples are heavily noise-dominated. The empirical mean recovers the scattering coefficient, while the absorption structure is largely absent. DPS recovers the scattering inclusions in the mean but fails to consistently identify both absorption inclusions, with most samples showing an incorrect number of inclusions.
UCoS and regularized UCoS provide more accurate reconstructions of the scattering inclusions, with uncertainty again concentrated near the inclusion regions. For absorption, the unregularized UCoS fails to recover the second inclusion in the circular case, which is captured by the regularized variant; in the elliptic case it is still missed. In addition, unregularized UCoS introduces a spurious third inclusion in the elliptic setting, an effect that is noticeably reduced by regularization.

In Table~\ref{tab: sim DOT}, we observe that UCoS and regularized UCoS outperform DPS and the Gaussian prior in terms of MSE, PSNR, and SSIM. Between the two UCoS variants, the regularized version achieves lower MSE and higher PSNR, while UCoS achieves slightly higher SSIM in most cases.
In terms of coverage, UCoS generally exhibits lower values than both DPS and the Gaussian prior, particularly in the limited-view setting, reflecting more concentrated uncertainty. In the limited-view case, the introduction of regularization increases coverage.
\fabian{To further verify that the SBD model is generalizing rather than memorizing the training data, we find the closest training data point to the ground truth images in Figures~\ref{fig:inclusion 1 absorption_scattering} and~\ref{fig:inclusion 2 absorption_scattering}, measured by SSIM. The corresponding minimal SSIM are given by $0.3043$ for the circular and $0.2794$ for the non-circular inclusions, whereas SBD-based posterior samples achieve a SSIM of $0.8902$ and $0.8591$ in the full-view and $0.7508$ and $0.7954$ in the limited-view setting, confirming that the SBD-based samples offer a much better reconstruction than would be possible by memorizing the data set.}

\begin{figure}[h]
  \centering
  \begin{subfigure}[b]{\textwidth}
    \centering
    \begin{subfigure}[b]{0.09\textwidth}
      \centering
      \includegraphics[width=\linewidth]{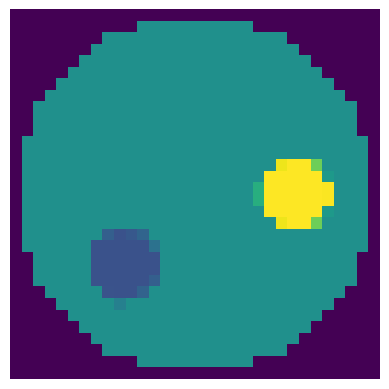}
      \vspace{1cm}
    \end{subfigure}
  \begin{subfigure}[t]{0.9\textwidth}
    \begin{subfigure}[b]{0.62\linewidth}
      \includegraphics[height=0.28\globaltextwidth]{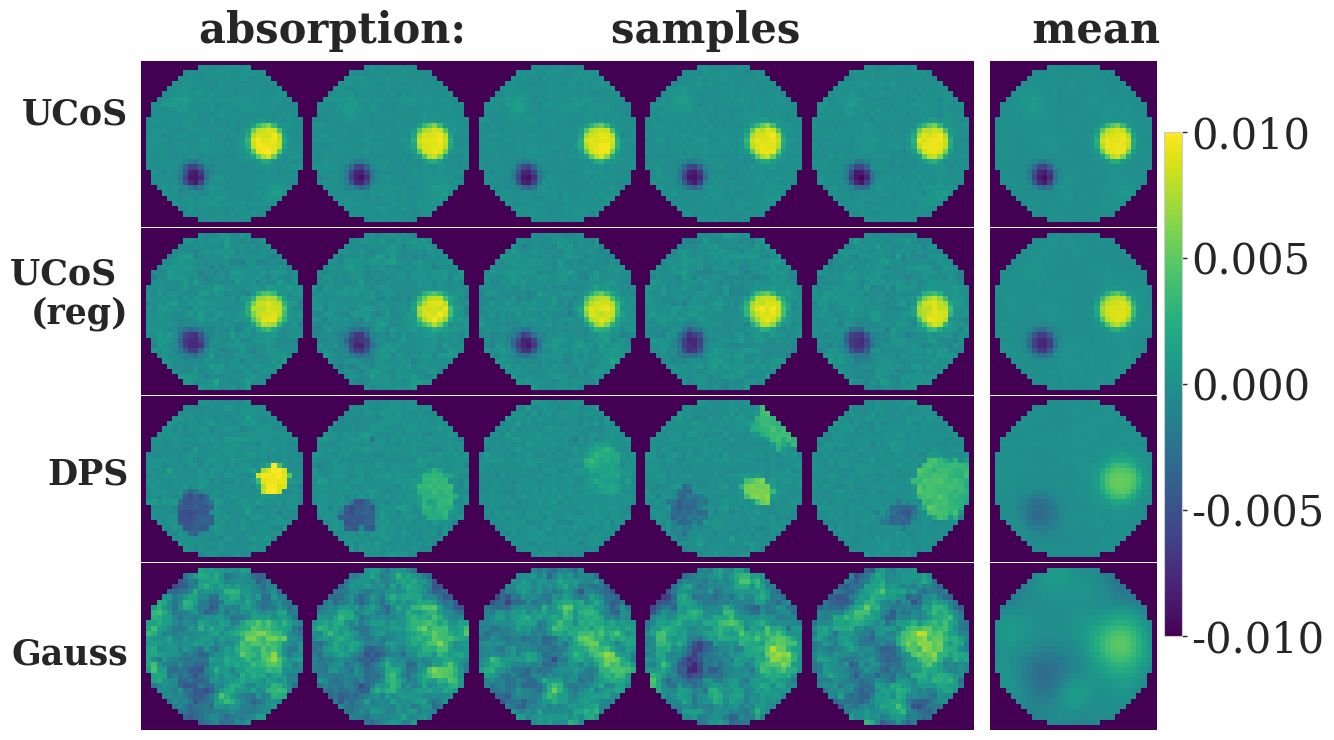}
    \end{subfigure}%
    \begin{subfigure}[b]{0.16\linewidth}
      \includegraphics[height=0.28\globaltextwidth]{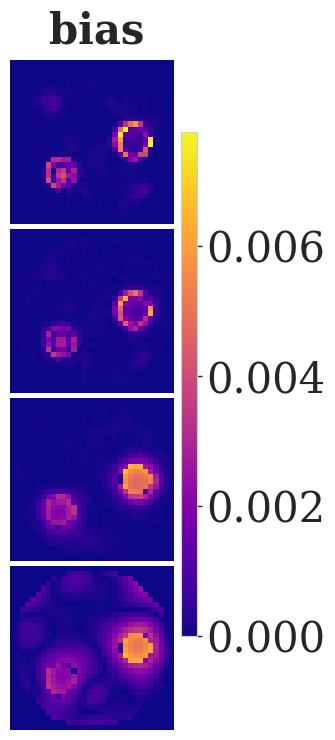}
    \end{subfigure}%
    \begin{subfigure}[b]{0.17\linewidth}
      \includegraphics[height=0.28\globaltextwidth]{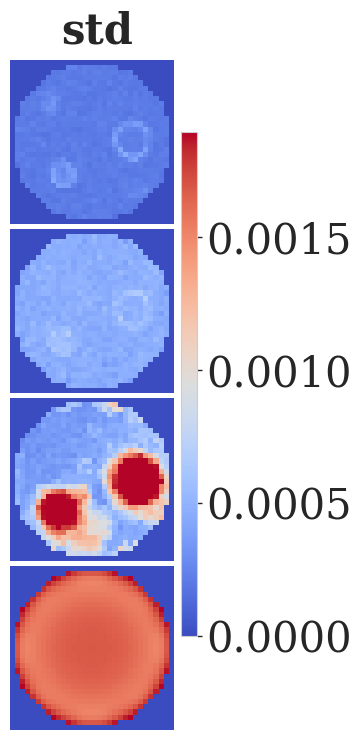}
    \end{subfigure}%
  \end{subfigure}
  \end{subfigure}
\begin{subfigure}[b]{\textwidth}
    \centering
    \begin{subfigure}[b]{0.09\textwidth}
      \centering
      \includegraphics[width=\linewidth]{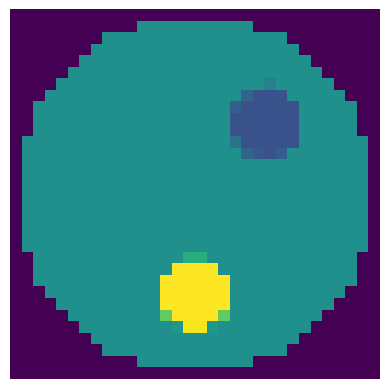}
          \vspace{1cm}
    \end{subfigure}
  \begin{subfigure}[t]{0.9\textwidth}
    \begin{subfigure}[b]{0.62\linewidth}
      \includegraphics[height=0.28\globaltextwidth]{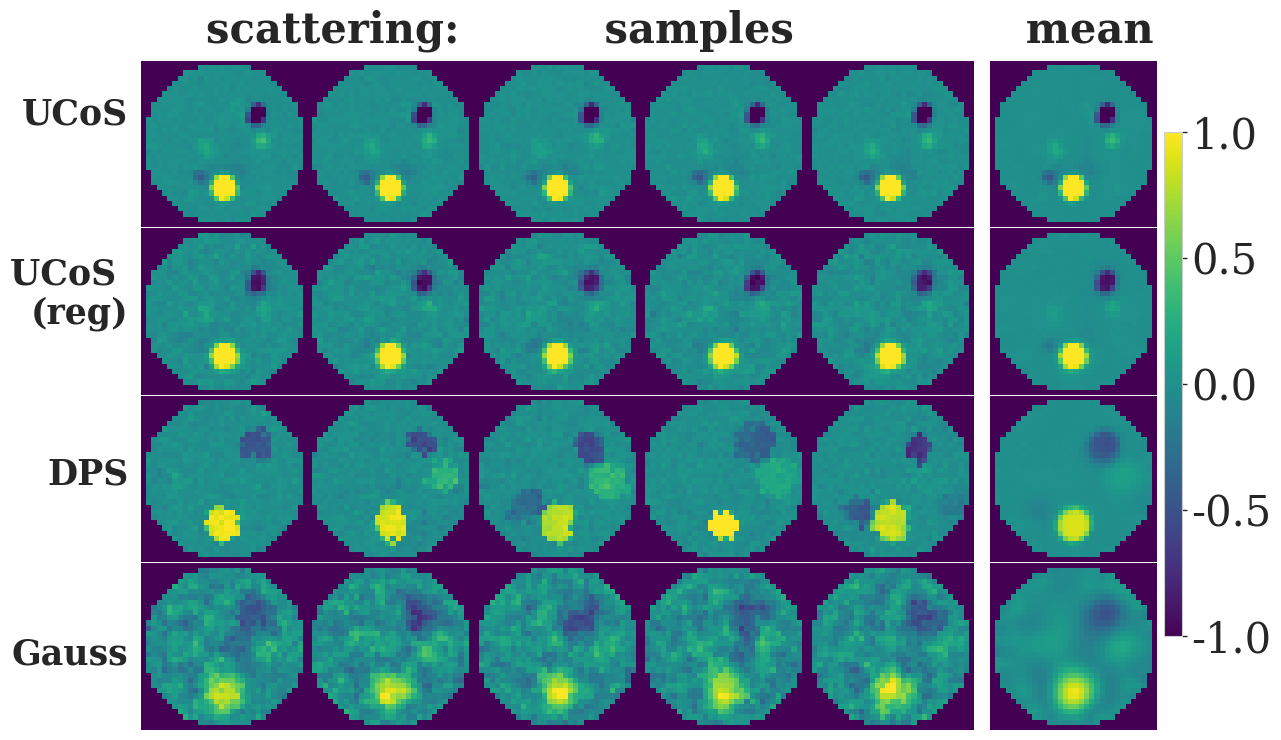}
    \end{subfigure}%
    \begin{subfigure}[b]{0.16\linewidth}
      \includegraphics[height=0.28\globaltextwidth]{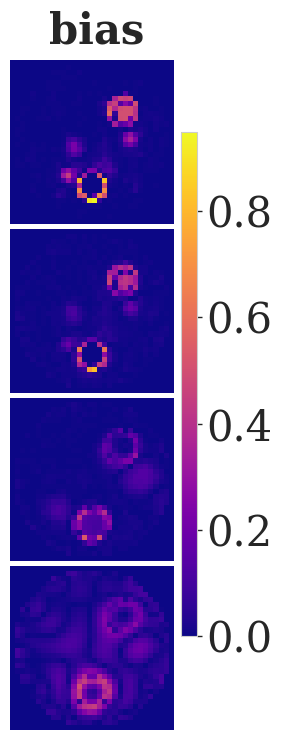}
    \end{subfigure}%
    \begin{subfigure}[b]{0.17\linewidth}
      \includegraphics[height=0.28\globaltextwidth]{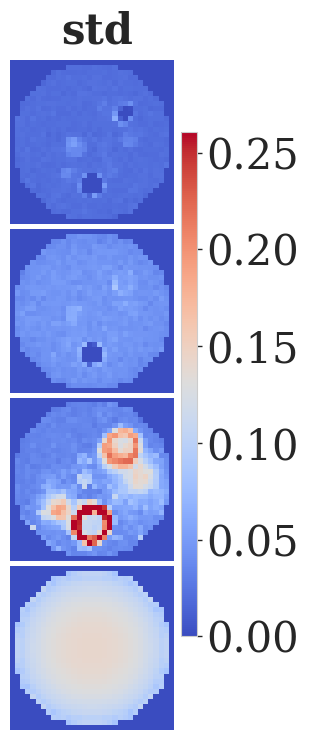}
    \end{subfigure}%
  \end{subfigure}
   \caption*{Circular inclusions}
  \end{subfigure}
    \begin{subfigure}[b]{\textwidth}
    \centering
    \begin{subfigure}[b]{0.09\textwidth}
      \centering
      \includegraphics[width=\linewidth]{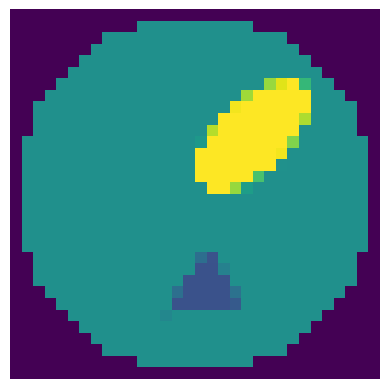}
      \vspace{1cm}
    \end{subfigure}
  \begin{subfigure}[t]{0.9\textwidth}
    \begin{subfigure}[b]{0.62\linewidth}
      \includegraphics[height=0.28\globaltextwidth]{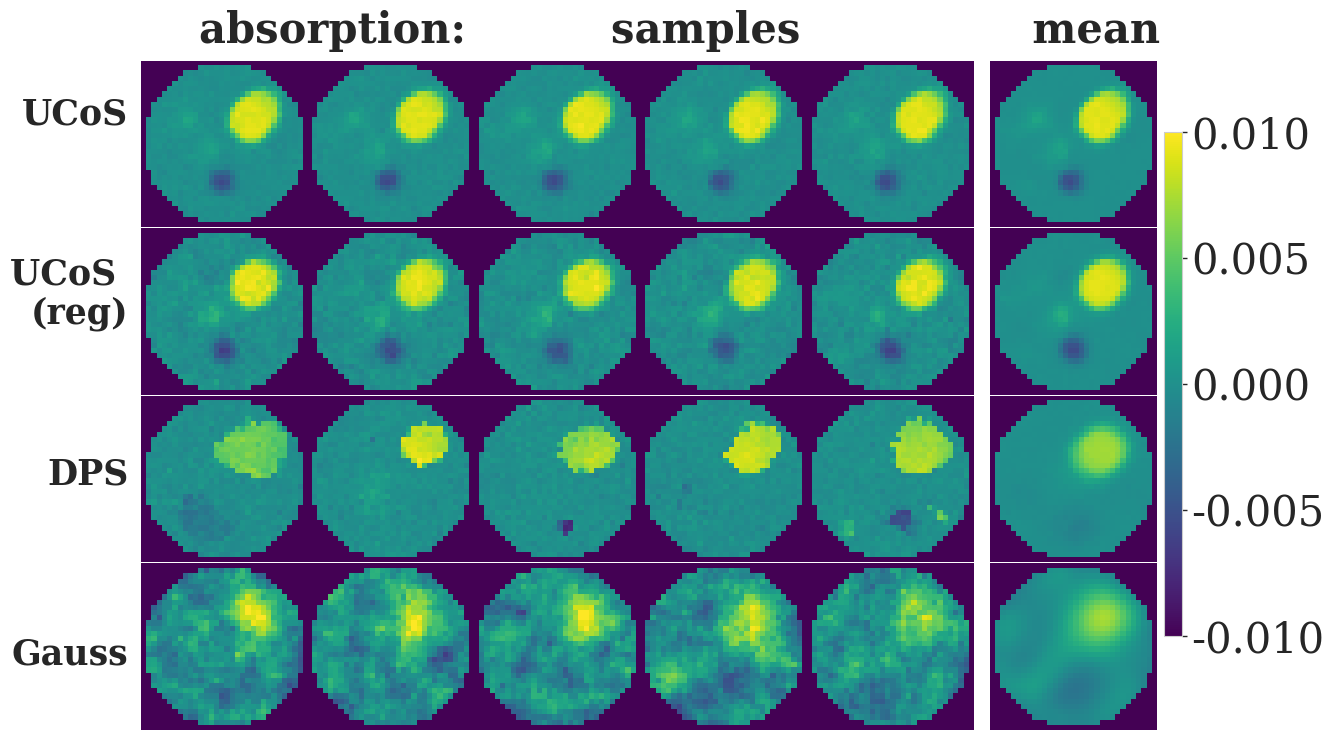}
    \end{subfigure}%
    \begin{subfigure}[b]{0.16\linewidth}
      \includegraphics[height=0.28\globaltextwidth]{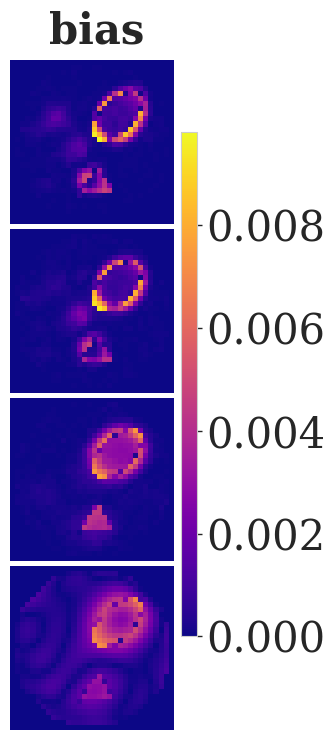}
    \end{subfigure}%
    \begin{subfigure}[b]{0.17\linewidth}
      \includegraphics[height=0.28\globaltextwidth]{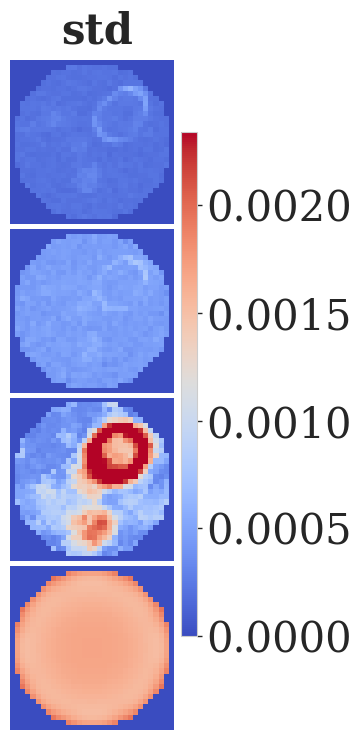}
    \end{subfigure}%
  \end{subfigure}
  \end{subfigure}
\begin{subfigure}[b]{\textwidth}
    \centering
    \begin{subfigure}[b]{0.09\textwidth}
      \centering
      \includegraphics[width=\linewidth]{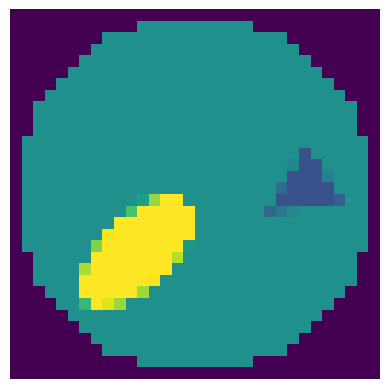}
          \vspace{1cm}
    \end{subfigure}
  \begin{subfigure}[t]{0.9\textwidth}
    \begin{subfigure}[b]{0.62\linewidth}
      \includegraphics[height=0.28\globaltextwidth]{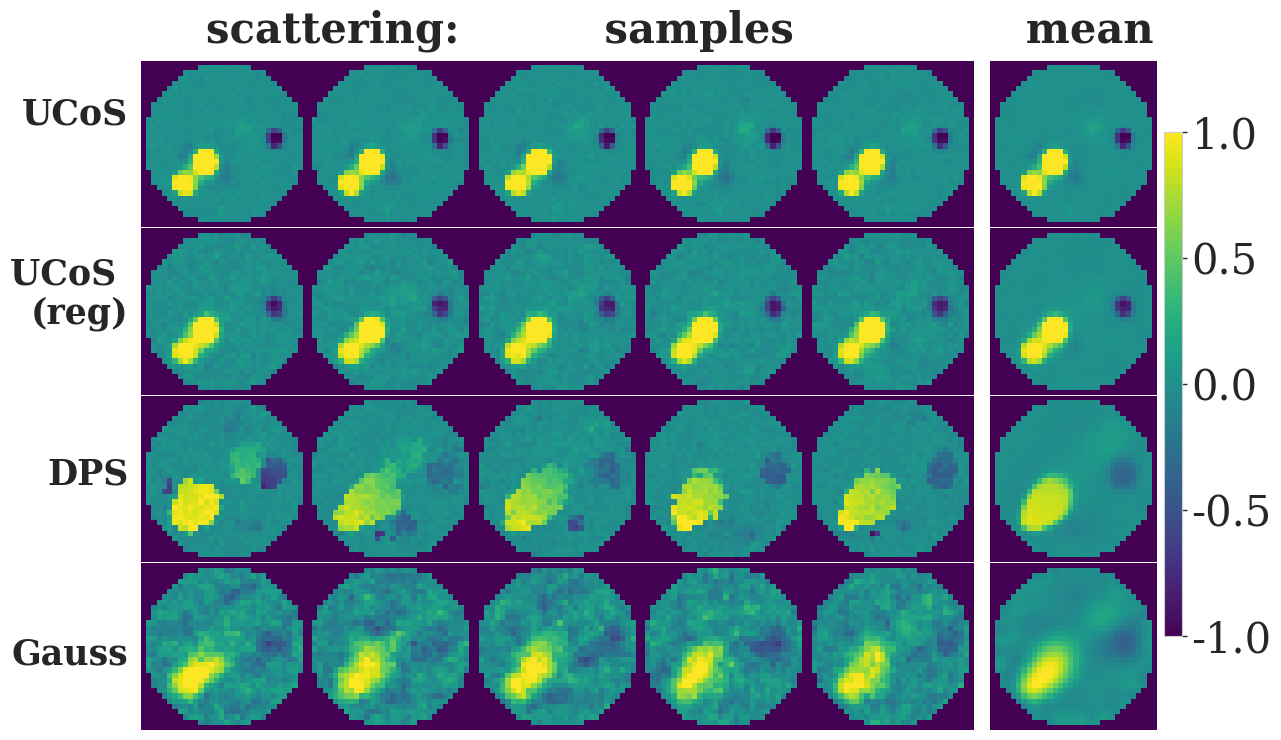}
    \end{subfigure}%
    \begin{subfigure}[b]{0.16\linewidth}
      \includegraphics[height=0.28\globaltextwidth]{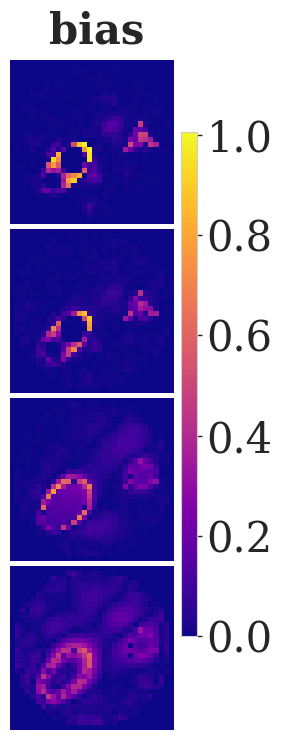}
    \end{subfigure}%
    \begin{subfigure}[b]{0.17\linewidth}
      \includegraphics[height=0.28\globaltextwidth]{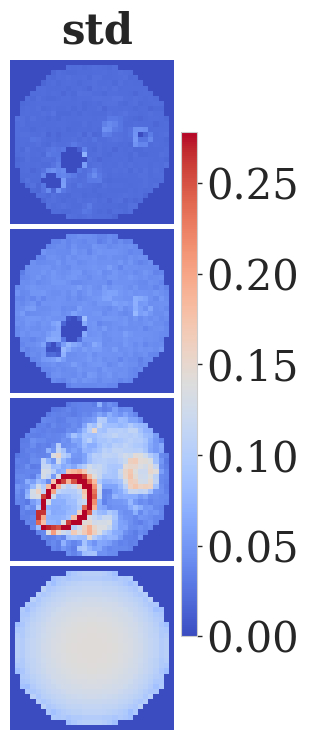}
    \end{subfigure}%
  \end{subfigure}
    \caption*{Non-circular inclusions}
  \end{subfigure}
    \caption{Simulated DOT for the full view setup. Columns from left to right: simulated ground truth (first column), five posterior samples (columns 2-6), an average (column 7), bias (column 8) and standard deviation (column 9), which are computed over 1000 generated samples. The methods used from top to bottom: UCoS, regularized UCoS, DPS, Gaussian OU process prior.}
  \label{fig:inclusion 1 absorption_scattering}
\end{figure}

\begin{figure}[h]
  \centering
\begin{subfigure}[b]{\textwidth}
    \centering
    \begin{subfigure}{0.09\textwidth}
      \centering
      \includegraphics[width=\linewidth]{figures/samples_sim/DOT/1true_absorption.png}
      \vspace{1cm}
    \end{subfigure}
    \begin{subfigure}{0.9\textwidth}
    \begin{subfigure}[b]{0.62\linewidth}
      \includegraphics[height=0.28\globaltextwidth]{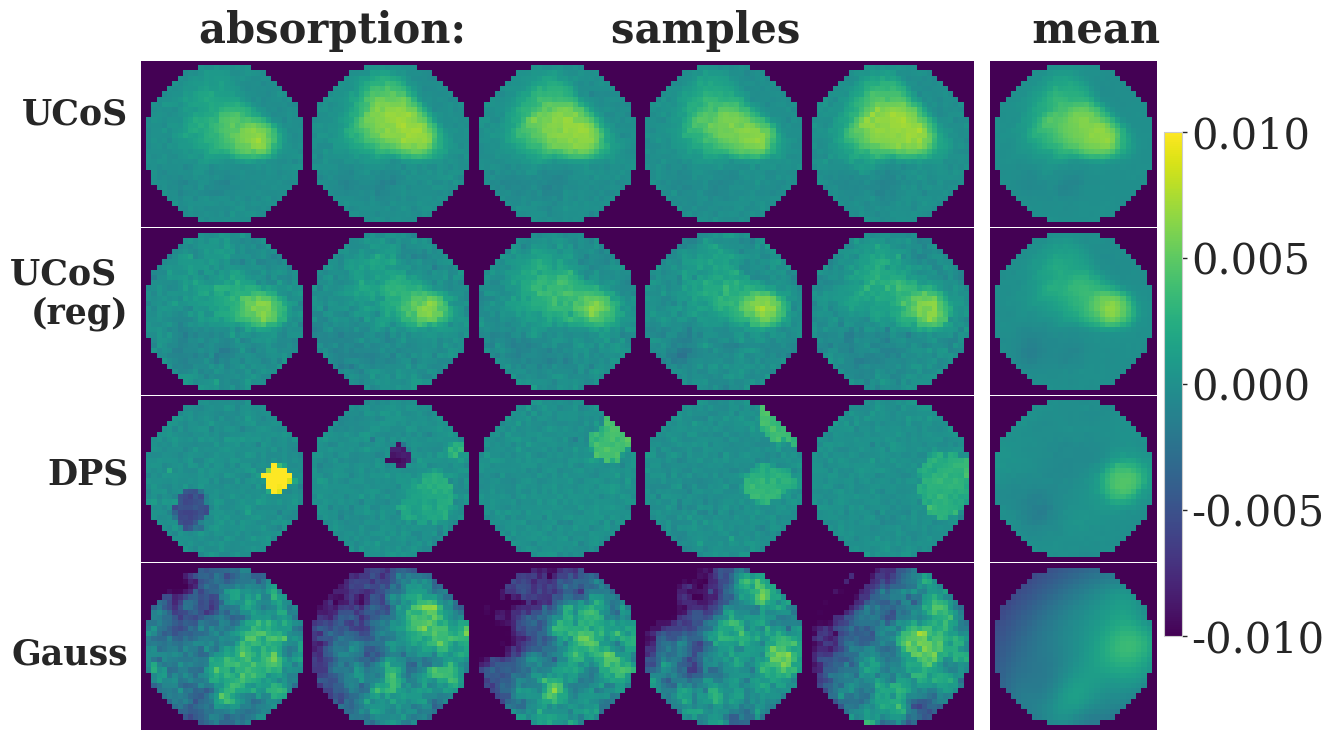}
    \end{subfigure}%
    \begin{subfigure}[b]{0.16\linewidth}
      \includegraphics[height=0.28\globaltextwidth]{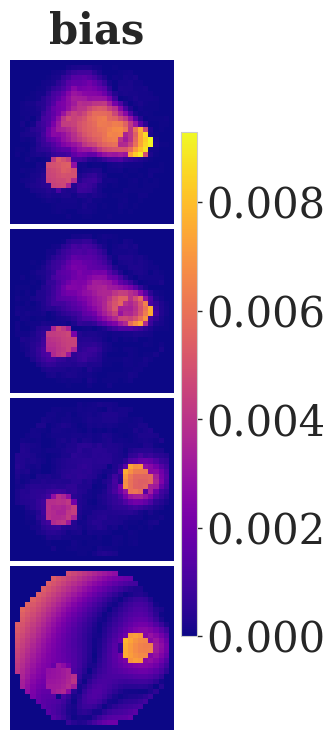}
    \end{subfigure}%
    \begin{subfigure}[b]{0.17\linewidth}
      \includegraphics[height=0.28\globaltextwidth]{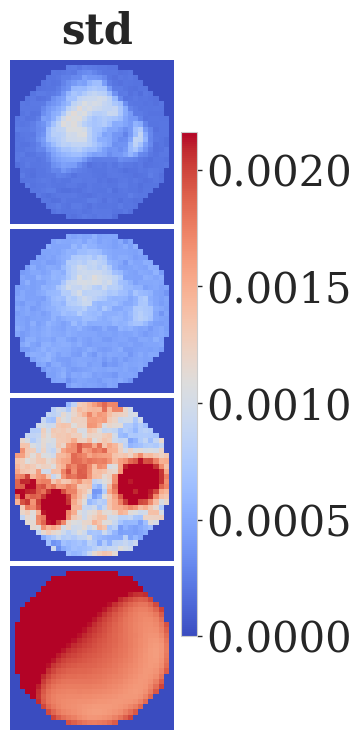}
    \end{subfigure}%
  \end{subfigure}
  \end{subfigure}
\begin{subfigure}[b]{\textwidth}
    \centering
    \begin{subfigure}[b]{0.09\textwidth}
      \centering
      \includegraphics[width=\linewidth]{figures/samples_sim/DOT/1true_scattering.png}
    \vspace{1cm}
    \end{subfigure}
    \begin{subfigure}[t]{0.9\textwidth}
    \begin{subfigure}[b]{0.62\linewidth}
      \includegraphics[height=0.28\globaltextwidth]{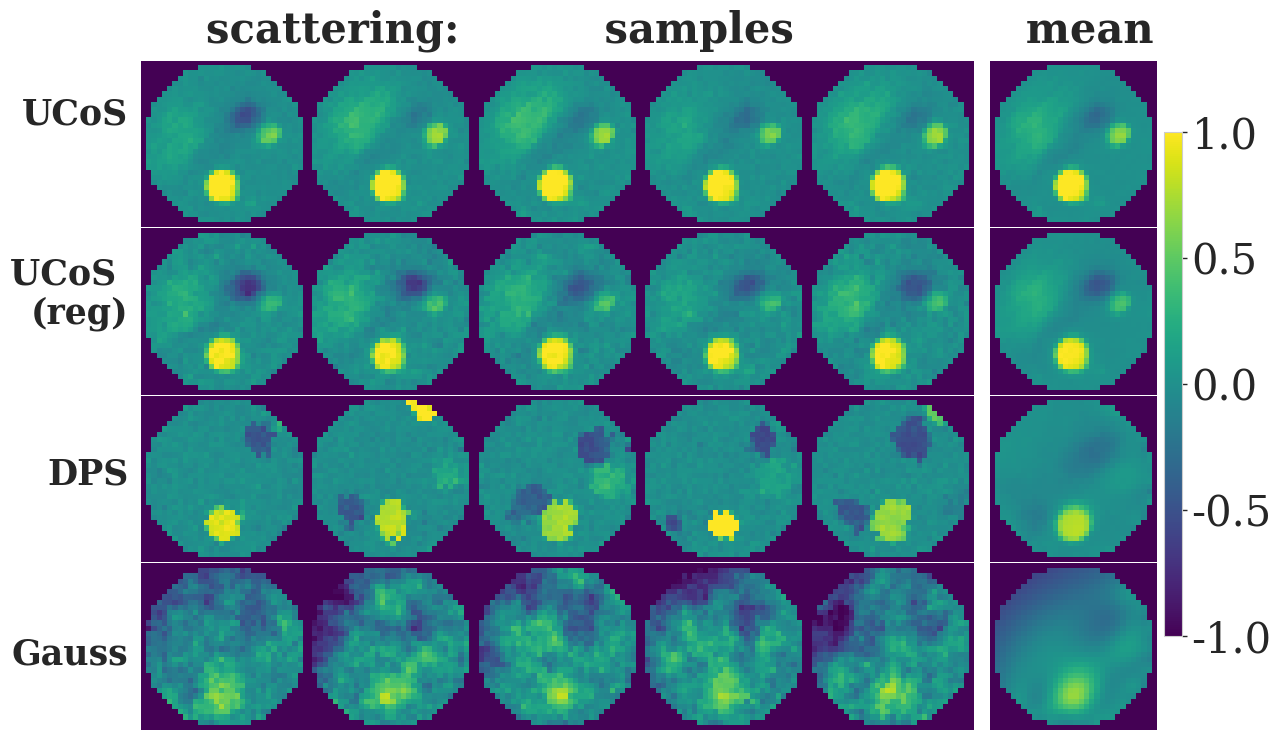}
    \end{subfigure}%
    \begin{subfigure}[b]{0.16\linewidth}
      \includegraphics[height=0.28\globaltextwidth]{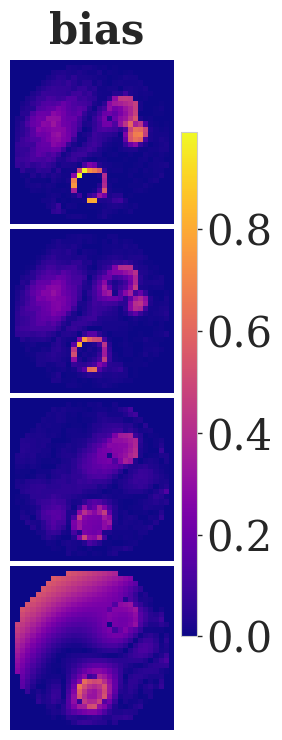}
    \end{subfigure}%
    \begin{subfigure}[b]{0.17\linewidth}
      \includegraphics[height=0.28\globaltextwidth]{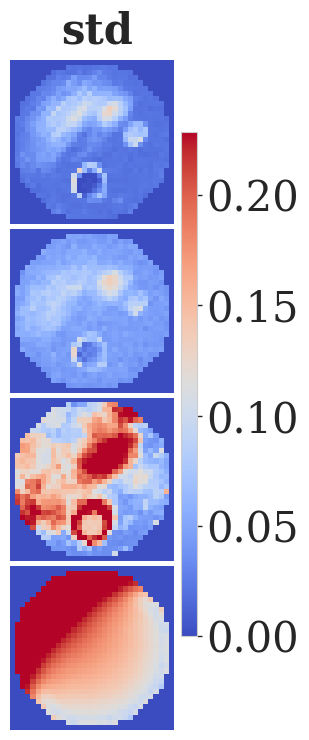}
    \end{subfigure}%
  \end{subfigure}
  \caption*{Circular inclusions}
  \end{subfigure}
 \begin{subfigure}[b]{\textwidth}
    \centering
    \begin{subfigure}{0.09\textwidth}
      \centering
      \includegraphics[width=\linewidth]{figures/samples_sim/DOT/2true_absorption.png}
      \vspace{1cm}
    \end{subfigure}
    \begin{subfigure}{0.9\textwidth}
    \begin{subfigure}[b]{0.62\linewidth}
      \includegraphics[height=0.28\globaltextwidth]{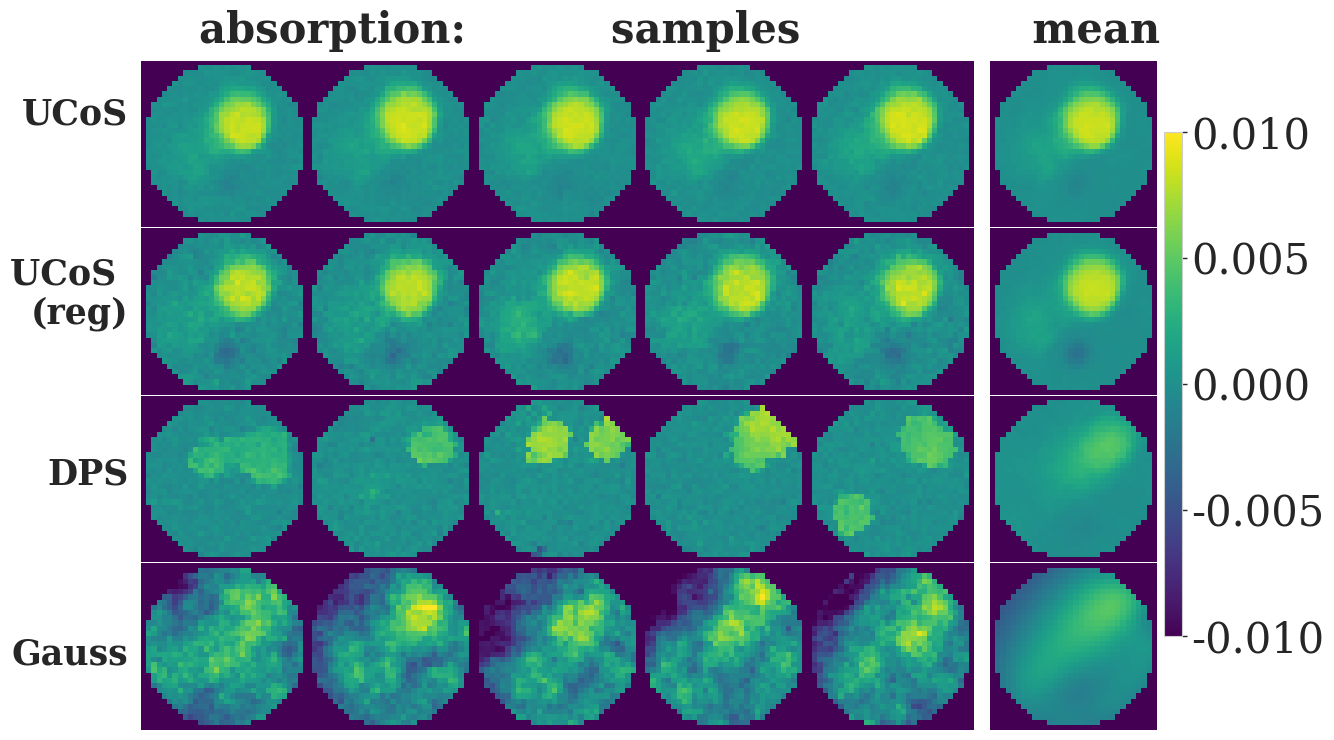}
    \end{subfigure}%
    \begin{subfigure}[b]{0.16\linewidth}
      \includegraphics[height=0.28\globaltextwidth]{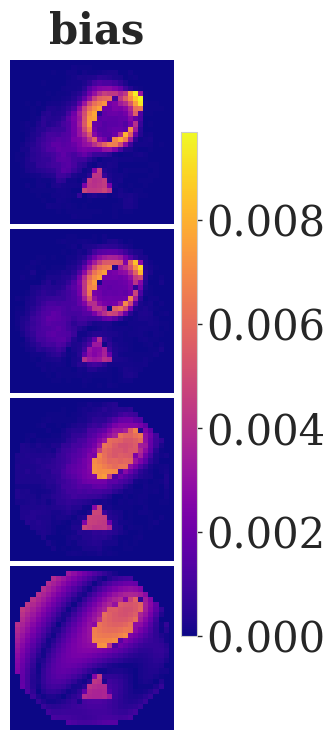}
    \end{subfigure}%
    \begin{subfigure}[b]{0.17\linewidth}
      \includegraphics[height=0.28\globaltextwidth]{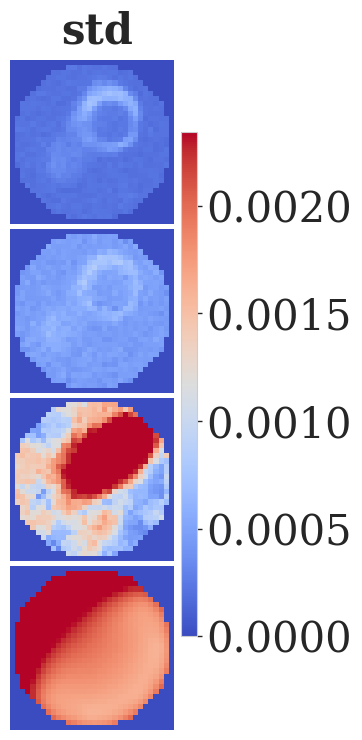}
    \end{subfigure}%
  \end{subfigure}
  \end{subfigure}
\begin{subfigure}[b]{\textwidth}
    \centering
    \begin{subfigure}[b]{0.09\textwidth}
      \centering
      \includegraphics[width=\linewidth]{figures/samples_sim/DOT/2true_scattering.png}
    \vspace{1cm}
    \end{subfigure}
    \begin{subfigure}[t]{0.9\textwidth}
    \begin{subfigure}[b]{0.62\linewidth}
      \includegraphics[height=0.28\globaltextwidth]{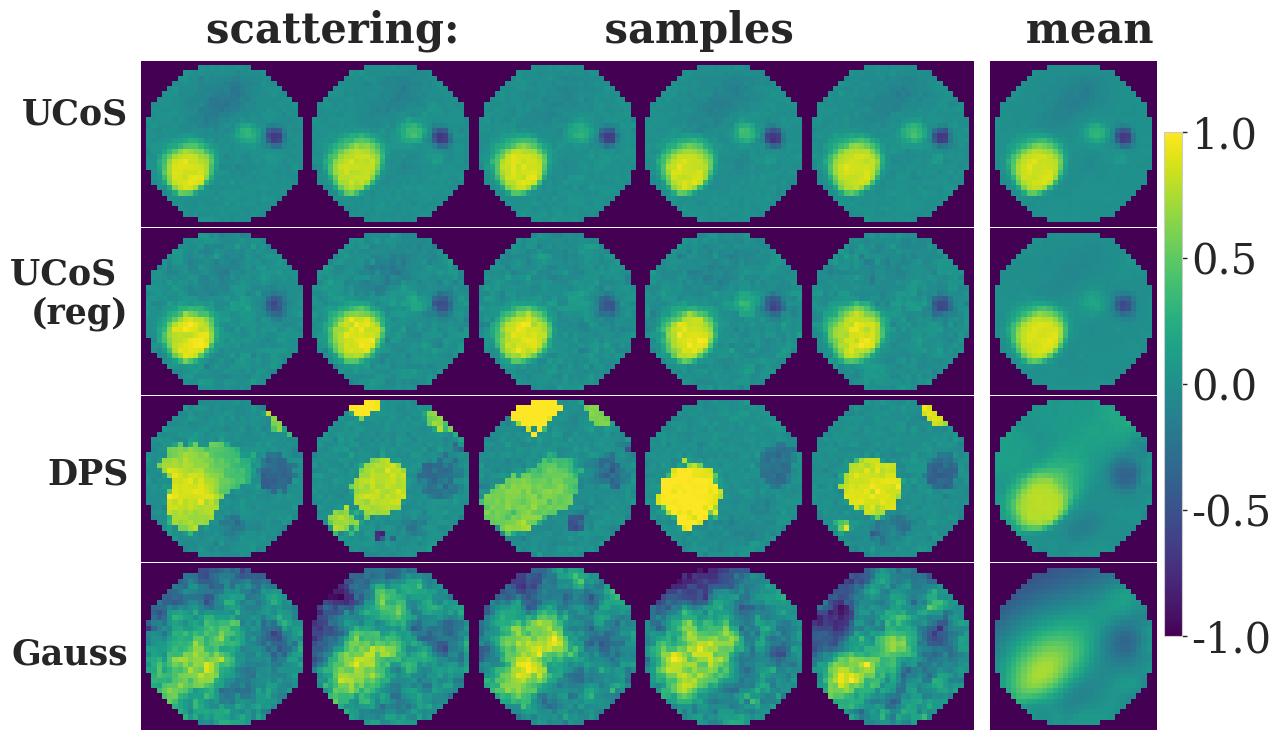}
    \end{subfigure}%
    \begin{subfigure}[b]{0.16\linewidth}
      \includegraphics[height=0.28\globaltextwidth]{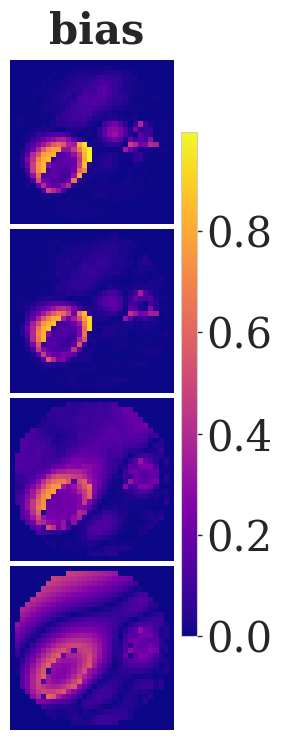}
    \end{subfigure}%
    \begin{subfigure}[b]{0.17\linewidth}
      \includegraphics[height=0.28\globaltextwidth]{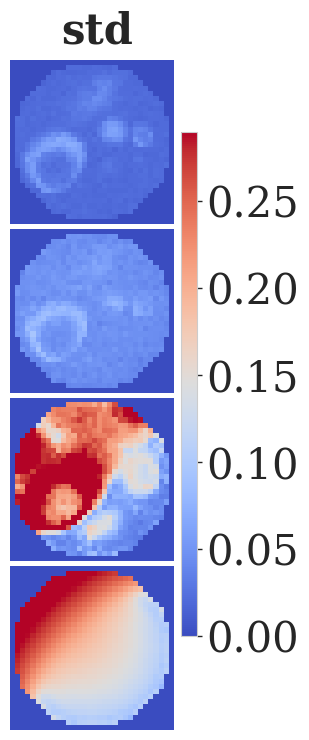}
    \end{subfigure}%
  \end{subfigure}
  \caption*{Non-circular inclusions}
  \end{subfigure}
    \caption{Simulated DOT for the limited view setup. Columns from left to right: simulated ground truth (first column), five posterior samples (columns 2-6), an average (column 7), bias (column 8) and standard deviation (column 9), which are computed over 1000 generated samples. The methods used from top to bottom: UCoS, regularized UCoS, DPS, Gaussian OU process prior.}
  \label{fig:inclusion 2 absorption_scattering}
\end{figure}

\begin{table}[]
    \centering
\begin{tabular}{c ccccc}
\hline
 \textbf{\, Full view  \, } & Metric & UCoS & UCoS (reg) & DPS &  Gaussian prior \\
\hline
\multirow{4}{*}{\shortstack{circular\\inclusions}}
& MSE & 0.0026 & \textbf{0.0020} & 0.0041 & 0.0069 \\
&PSNR & 25.7957 & \textbf{27.0083} & 24.0490 & 21.5962 \\
&SSIM & \textbf{0.8902} & 0.8690 & 0.8310 & 0.6749 \\
&Coverage & 0.9053 & \textbf{0.9365} & 0.9863 & 0.9644 \\
\hline
\multirow{4}{*}{\shortstack{non-\\circular\\inclusions}}
&MSE & 0.0043 & \textbf{0.0036} & 0.0064 & 0.0082 \\
&PSNR & 23.6944 & \textbf{24.4204} & 21.9908 & 20.8678 \\
&SSIM & \textbf{0.8591} & 0.8436 & 0.7890 & 0.6853 \\
&Coverage & 0.8677 & 0.9160 & 0.9717 & \textbf{0.9292} \\
\hline
\end{tabular}

\begin{tabular}{c ccccc}
 \textbf{Limited view} & Metric & UCoS & UCoS (reg) & DPS &  Gaussian prior \\
\hline
\multirow{4}{*}{\shortstack{circular\\inclusions}}
&MSE &  0.0094 & \textbf{0.0052} & 0.0072 & 0.0190 \\
&PSNR & 20.3505 & \textbf{22.8782} & 21.6447 & 17.2935 \\
&SSIM & 0.7508 & \textbf{0.7752} & 0.7646 & 0.5573 \\
&Coverage & 0.7383 & 0.8350 & 0.9795 & \textbf{0.9517} \\
\hline
\multirow{4}{*}{\shortstack{non-\\circular\\inclusions}}
&MSE & 0.0077 & \textbf{0.0065} & 0.0147 & 0.0176 \\
&PSNR & 21.1367 & \textbf{21.8703} & 18.5251 & 17.5957 \\
&SSIM & \textbf{0.7954} & 0.7941 & 0.7148 & 0.5916 \\
&Coverage & 0.7407 & 0.8228 & \textbf{0.9717} & 0.9204 \\
\hline
\end{tabular}
    \caption{Performance metrics for the Simulated DOT problem}
    \label{tab: sim DOT}
\end{table}

\FloatBarrier

\subsection{Influence of the regularization strength}\label{sec: influence of alpha}
\fabian{The regularisation strength $\alpha$ is a \revise{trade-off} between the learned (data-dependent) and a model-based Gaussian score function. As such, it is best tuned based on a held-out dataset drawn from a potentially perturbed but still plausible distribution. In our case this data set is given by the single ground truth with non-circular inclusions. We propose to choose a metric of interest (e.g. PSNR, SSIM or a mixture) and then choose $\alpha$ to maximise the reconstruction accuracy in that metric.}

\fabian{We now investigate the influence of the regularization parameter $\alpha$ in the regularized UCoS framework, evaluated in terms of Bias, PSNR, and SSIM, focusing on the limited-view case with two non-circular inclusions.}

\fabian{As shown in Figure~\ref{fig: alpha}, the optimal value of $\alpha$ depends on the chosen metric: the bias is minimized at $\alpha \approx 0.55$, PSNR are best at $\alpha \approx 0.6$, and SSIM is maximized at $\alpha \approx 0.35$. Overall, the variation across different values of $\alpha$ remains moderate. Since no single value optimizes all metrics simultaneously, we transform the values for PSNR, SSIM and Bias to the interval $[0, 1]$ with $1$ being the best value and then chose $\alpha =0.5$ which optimizes the sum of the three re-weighted metrics.}

\begin{figure}[h]
    \centering
    \includegraphics[width=0.95\linewidth]{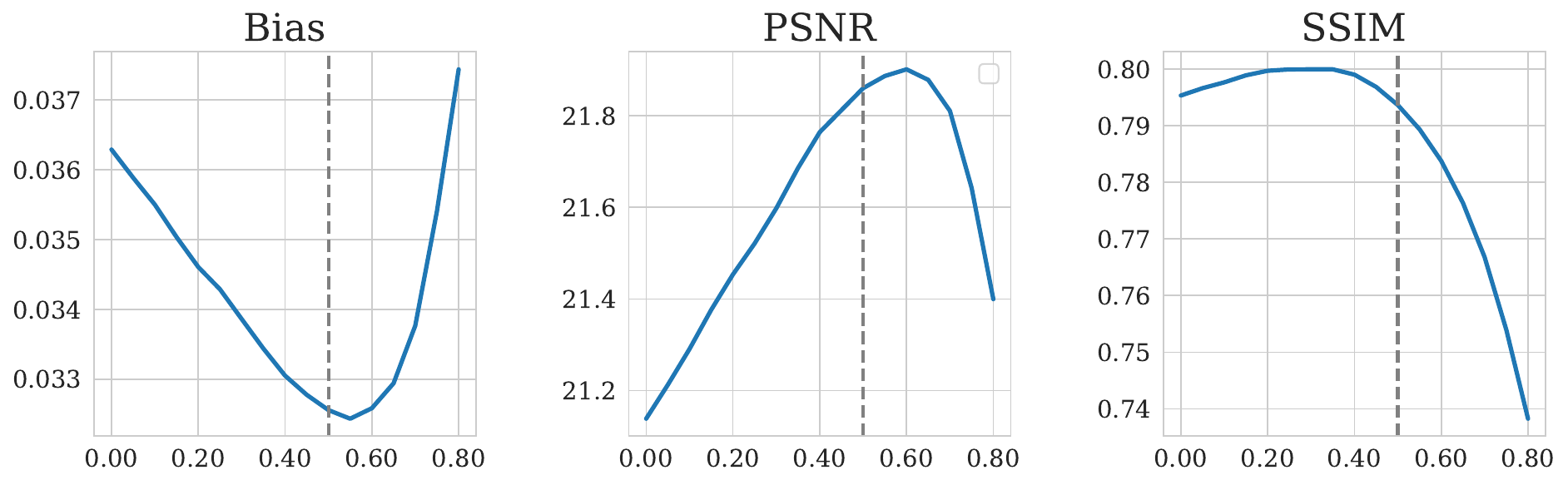}
    \caption{Evaluation of Bias, PSNR and SSIM as a function of $\alpha$. The horizontal axis shows $\alpha$, while the vertical axis reports the metric values. A dotted vertical line for $\alpha = 0.5$ is included in all plots. Metrics are computed over $1,000$ Monte Carlo samples.}
    \label{fig: alpha}
\end{figure}

\FloatBarrier


\section{Experimental}\label{sec:expt}

\subsection{Measurement setup}
The measurements were performed using the time-domain DOT setup in the University of Eastern Finland \cite{Mozumder2023setup}.
In the experiments, light pulses with an approximate duration of 200 $\sim3$\,ns were generated for each source-detector channel pair (repetition rate of 10\,Hz, pulse energy 0.12\,mJ, wavelength 808\,nm) via a Nd:YAG laser with optical parametric oscillator (NT352B, Ekspla Uab), guided to a phantom via fiber optics and collected via an avalanche photodiode (APD430A, Thorlabs).
The laser pulse-to-pulse variation was measured with the second Si biased photodetector (DET025AFC, Thorlabs).
The signals from both photodetectors were digitalized by high-bandwidth oscilloscope (WavePro 254HD, Teledyne LeCroy) with 12-bit dynamic range and sampling period of 50 ps.
The digitalized signals were then converted to frequency-domain and used for image reconstruction.
The source and detectors were located at $z=-5$\,mm and $z=5$\,mm from the vertical centre of the phantom at $z=0$\,mm.

The cylindrical phantom consisted of an outer solid and inner liquid parts (Fig.~\ref{fig:setup}).
The solid part was made of epoxy resin combined with TiO$_2$ powder and toner\cite{Zhao2022Phantom} providing absorption $\mua=0.01$\,mm$^{-1}$ and reduced scattering $\mu_s'=0.8$\,mm$^{-1}$ coefficients at 800\,nm.
The solid part had inner and outer diameters of 60\,mm and 80\,mm, respectively.
The liquid part consisted of 1\,\% SMOFlipid (Fresenius Kabi) and 0.001\,\% India ink which gave approximately $\mua=0.0065$\,mm$^{-1}$ and $\mu_s'=0.95$\,mm$^{-1}$.
Inclusions were made by filling polypropylene tubes (diameter 6\,mm, wall thickness 0.5\,mm) with different concentrations of Indian ink, intralipid or contrast agents such as indocyanine green (ICG) or light absorbing black porous silicon nanoparticles (Si NPs)\cite{Xu2018BPSi}.

Training and evaluation of UCoS for the experimental data was carried out similarly as described earlier in Section \ref{sec: training and eval}. For training, simulated data was used, as described earlier in Section \ref{sec:datagen}. During sampling, the measured and Fourier transformed data from the different targets were used as the $y$ data.

\begin{figure}[!htb]
    \centering
    \includegraphics[width=0.45\textwidth]{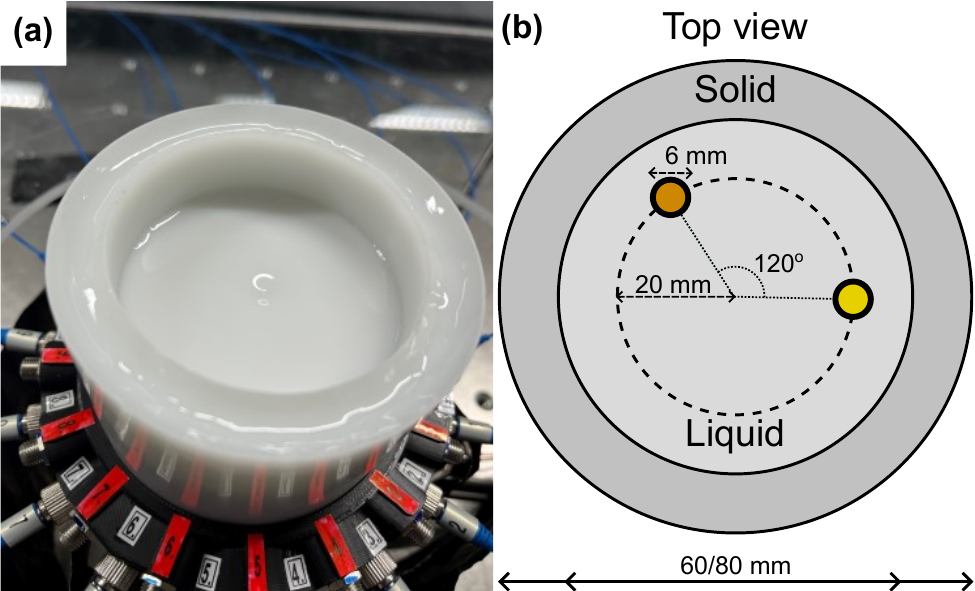}
    \caption{
    (a) Photograph of the cylindrical phantom used in DOT experiments.
    (b) Top view of the phantom with inclusions.
    }
    \label{fig:setup}
\end{figure}

\subsection{Results}
In this section, we present the experimental results shown in Figures~\ref{fig:expt1} and~\ref{fig:expt2}, where the considered targets are described in the respective captions.
The exact absorption and scattering coefficients are unknown, though they can be estimated with reasonable accuracy, and the inclusion locations are known. As the true parameters are not available, exact performance metrics cannot be computed.

In most cases, UCoS and its regularized variant produce reconstructions that closely align with the true inclusion locations, with uncertainty localized around them. Spurious inclusions occasionally appear: in Figure~\ref{fig:expt1} (a), a false inclusion is generated at the bottom of the absorption images, which disappears at higher inclusion concentrations as shown in Figure~\ref{fig:expt1} (b). In the setting where the physical scattering coefficient of the inclusion is very small, the posterior samples correctly estimate negligible scattering in Figure~\ref{fig:expt2} (b), while in Figure~\ref{fig:expt2} (a) additional 
inclusions appear at the absorption inclusion locations, which may reflect either residual scattering in the liquid or reconstruction artefacts.

In contrast, DPS exhibits substantial variability in location, size and number of the inclusions and occasionally produces artifacts that deviate from the underlying structure (e.g., the samples four and five in Figure \ref{fig:expt1}(a)).  
The sample-level consistency of DPS is generally very low, and significantly lower than that of UCoS, with the empirical average failing to capture the true inclusions, or doing so only faintly, across all experiments.

Gaussian posterior samples show large variance across the entire domain. As with DPS, the posterior mean can occasionally faintly approximate the true coefficients (e.g., Figure \ref{fig:expt1}(b) and absorption in Figure \ref{fig:expt2}), with low contrast.


\begin{figure}[h]
    \centering
    \begin{subfigure}{0.8\textwidth}
        \begin{subfigure}{0.1\textwidth}
            \includegraphics[width=\linewidth]{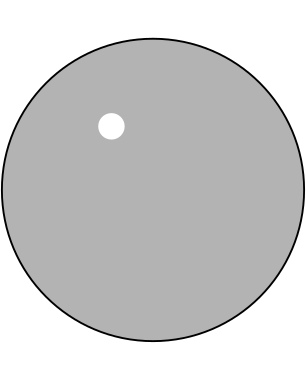}
            \vspace{5mm}
        \end{subfigure}
        \begin{subfigure}{0.7\textwidth}
            \includegraphics[height=0.27\globaltextwidth]{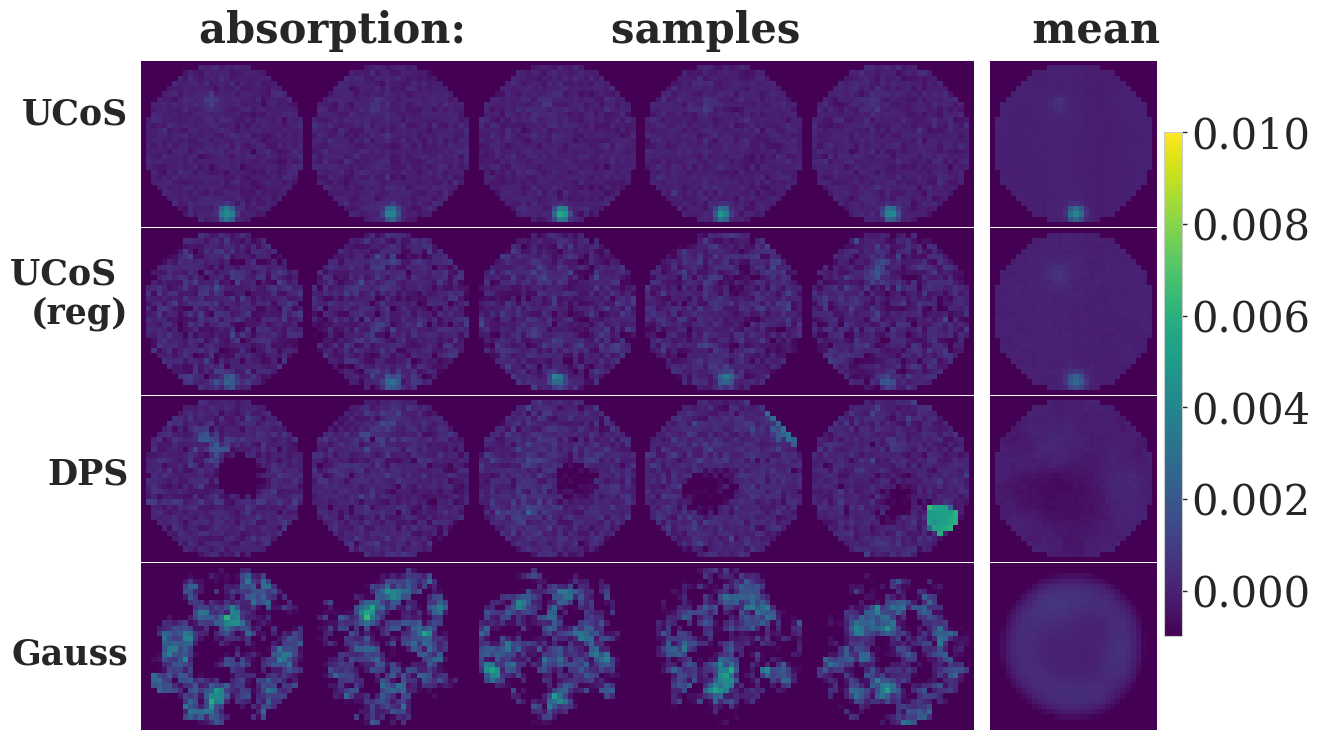}
        \end{subfigure}
        \begin{subfigure}{0.18\textwidth}
        \includegraphics[height=0.27\globaltextwidth]{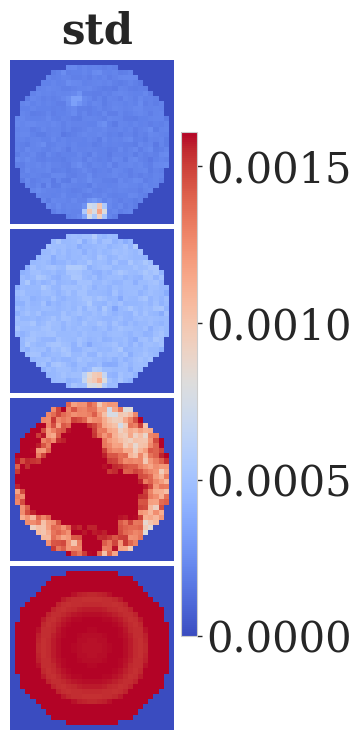}
        \end{subfigure}
        \begin{subfigure}{0.1\textwidth}
            \includegraphics[width=\linewidth]{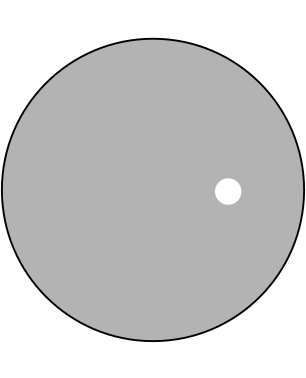}
            \vspace{5mm}
        \end{subfigure}
        \begin{subfigure}{0.7\textwidth}
            \includegraphics[height=0.27\globaltextwidth]{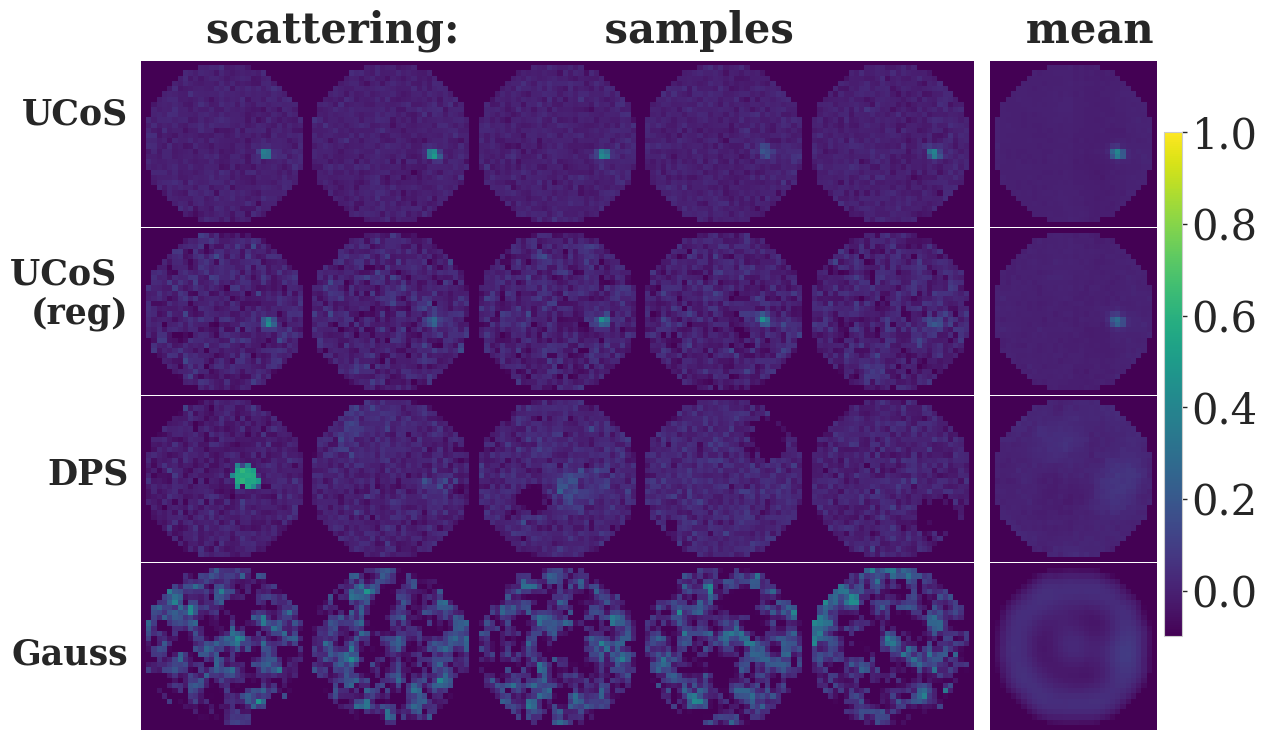}
        \end{subfigure}
        \begin{subfigure}{0.18\textwidth}
            \includegraphics[height=0.27\globaltextwidth]{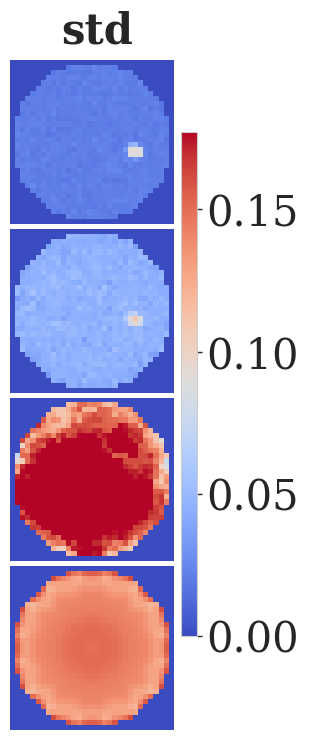}
        \end{subfigure}
        \caption{Intralipid x2, ink x2}
        \vfill
    \end{subfigure}
    \begin{subfigure}{0.8\textwidth}
            \begin{subfigure}{0.1\textwidth}
            \includegraphics[width=\linewidth]{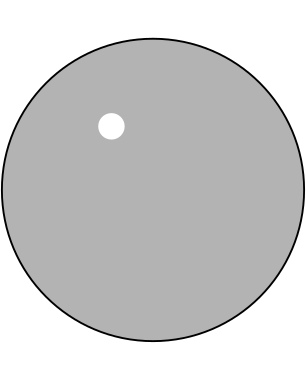}
            \vspace{5mm}
        \end{subfigure}
        \begin{subfigure}{0.7\textwidth}
            \includegraphics[height=0.27\globaltextwidth]{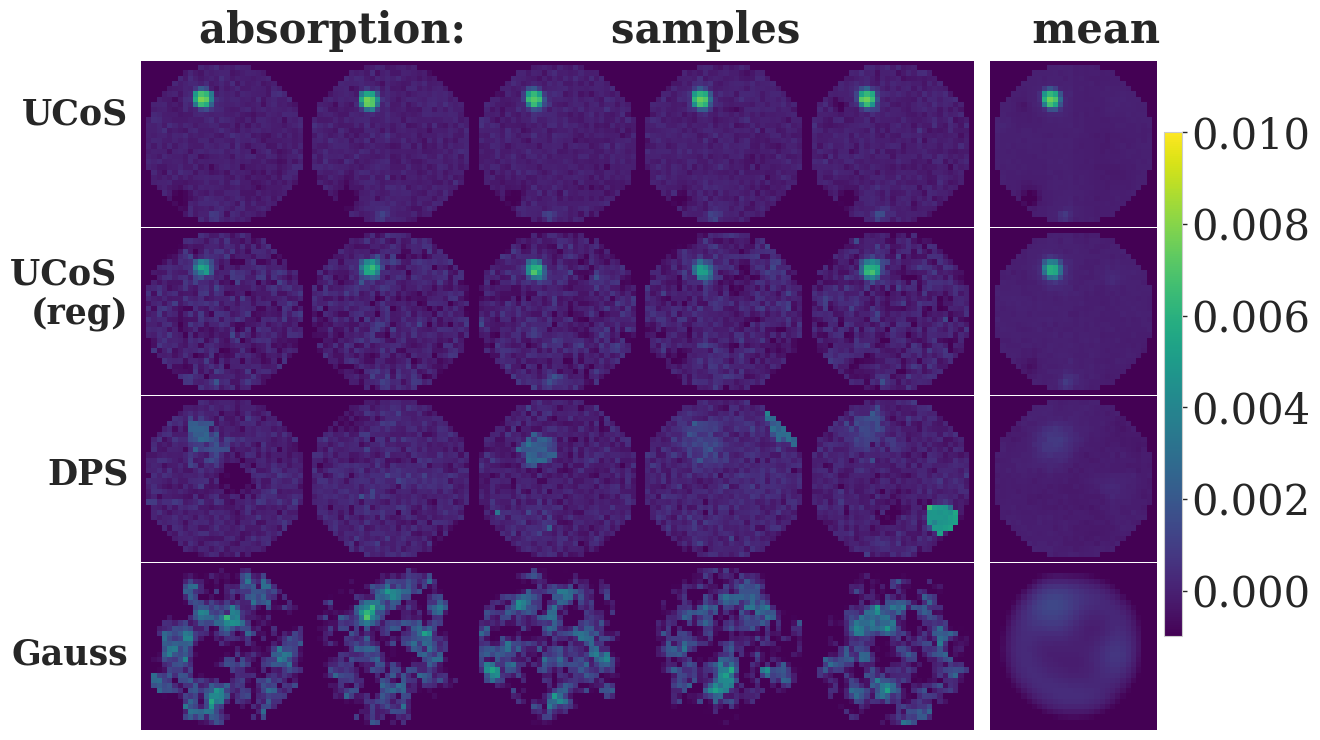}
        \end{subfigure}
        \begin{subfigure}{0.18\textwidth}
            \includegraphics[height=0.27\globaltextwidth]{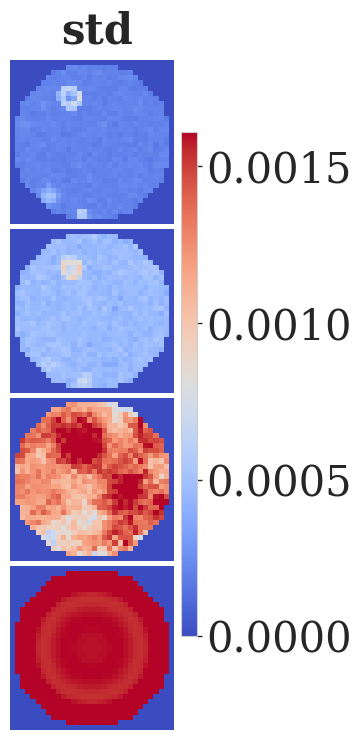}
        \end{subfigure}
        \begin{subfigure}{0.1\textwidth}
            \includegraphics[width=\linewidth]{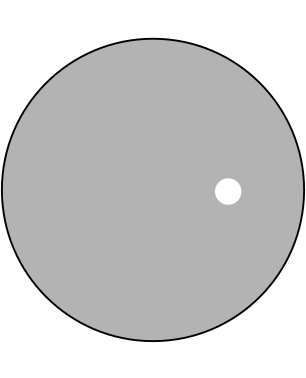}
            \vspace{5mm}
        \end{subfigure}
        \begin{subfigure}{0.7\textwidth}
        \includegraphics[height=0.27\globaltextwidth]{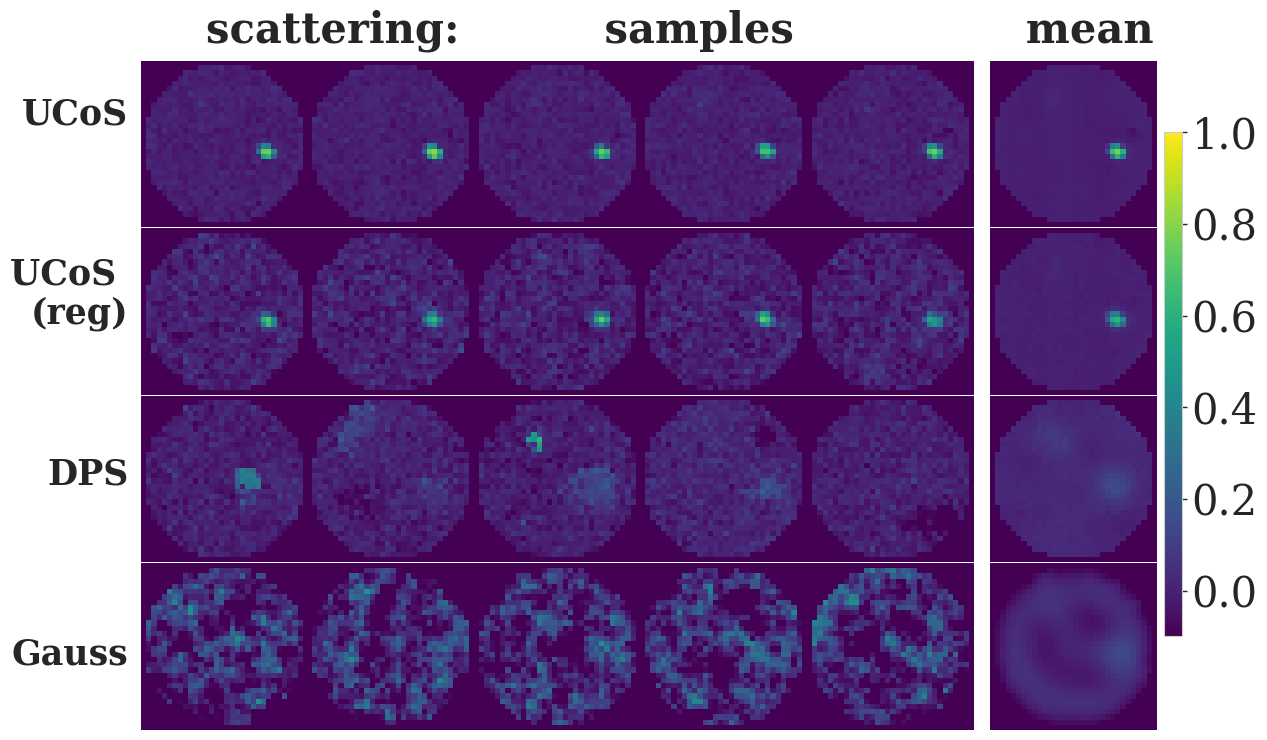}
        \end{subfigure}
        \begin{subfigure}{0.18\textwidth}
            \includegraphics[height=0.27\globaltextwidth]{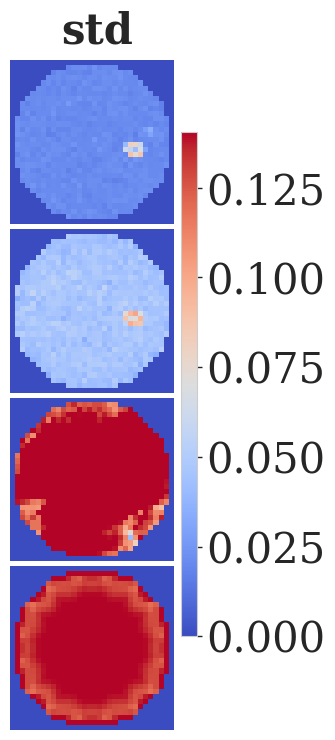}
        \end{subfigure}
        \caption{Intralipid x5, ink x5}
    \end{subfigure}
    \caption{Reconstructions using the experimental data measured in phantoms with two inclusions containing: (a) two times increased concentration of ink (absorption inclusion) or intralipid (scattering inclusion) as compared to the background phantom; (b) same as (a) but with five times increased concentrations.
    Rows in each panel show different reconstruction methods listed from top to bottom: UCoS, regularized UCoS, DPS, Gaussian OU process prior.
    Each row presents from left to right: approximate ground truth, 5 samples, sample average and standard deviation.}
    \label{fig:expt1}
\end{figure}

\begin{figure}
    \centering
    \begin{subfigure}{0.8\textwidth}
        \begin{subfigure}{0.1\textwidth}
            \includegraphics[width=\linewidth]{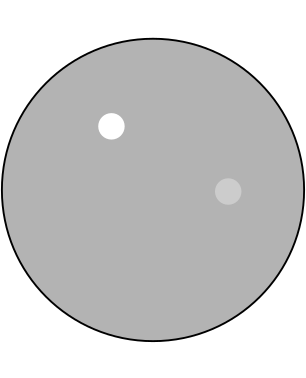}
            \vspace{5mm}
        \end{subfigure}
        \begin{subfigure}{0.7\textwidth}
            \includegraphics[height=0.27\globaltextwidth]{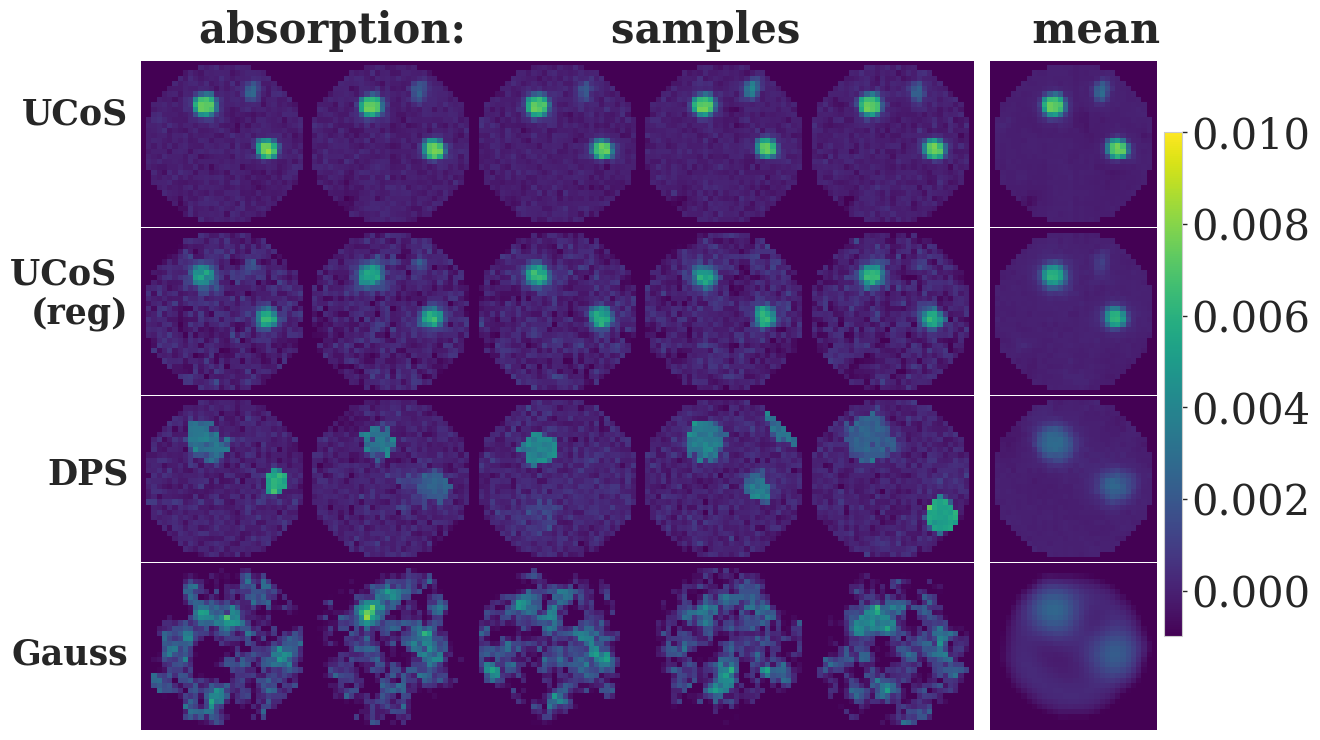}
        \end{subfigure}
        \begin{subfigure}{0.18\textwidth}
            \includegraphics[height=0.27\globaltextwidth]{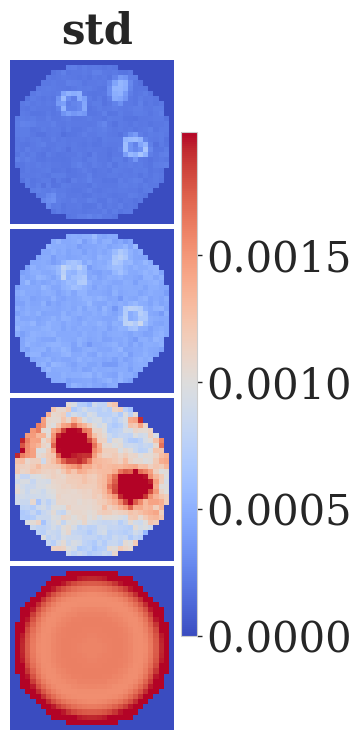}
        \end{subfigure}
        \hfill
        \begin{subfigure}{0.1\textwidth}
            \includegraphics[width=\linewidth]{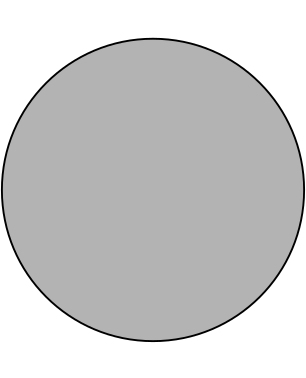}
            \vspace{5mm}
        \end{subfigure}
        \begin{subfigure}{0.7\textwidth}
        \includegraphics[height=0.27\globaltextwidth]{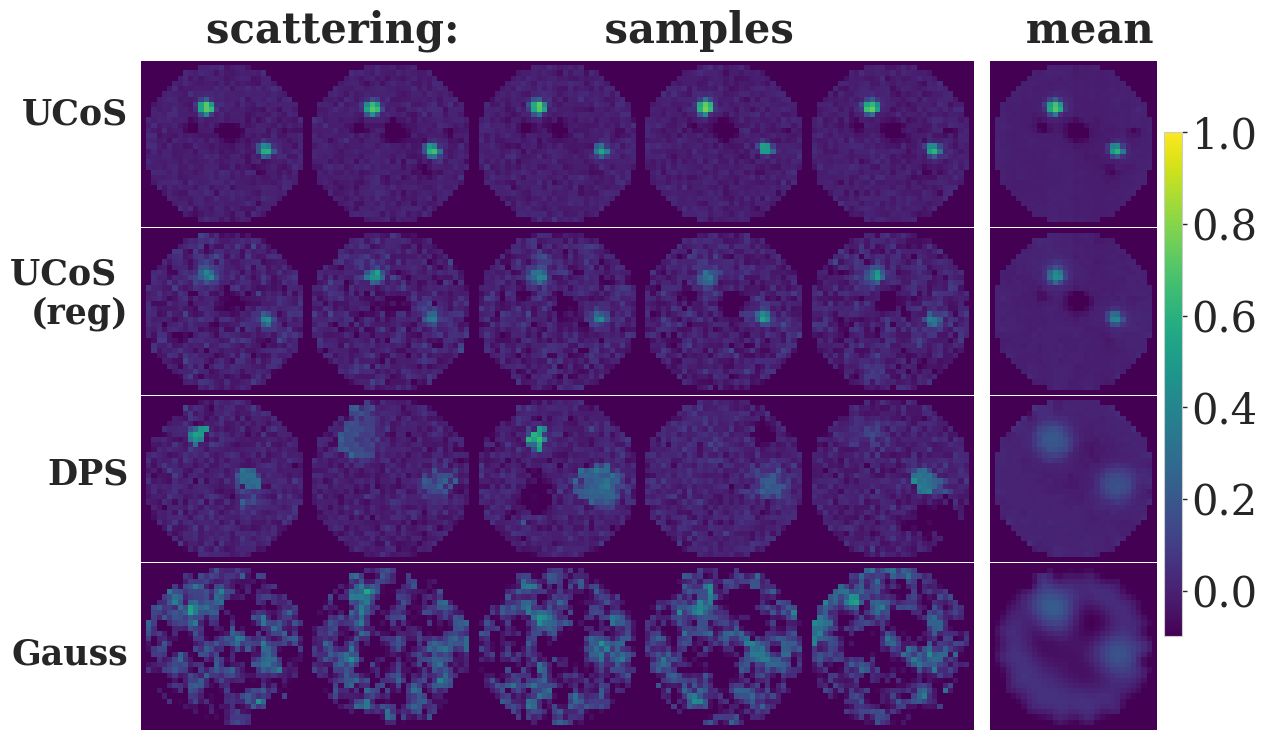}
        \end{subfigure}
        \begin{subfigure}{0.18\textwidth}
            \includegraphics[height=0.27\globaltextwidth]{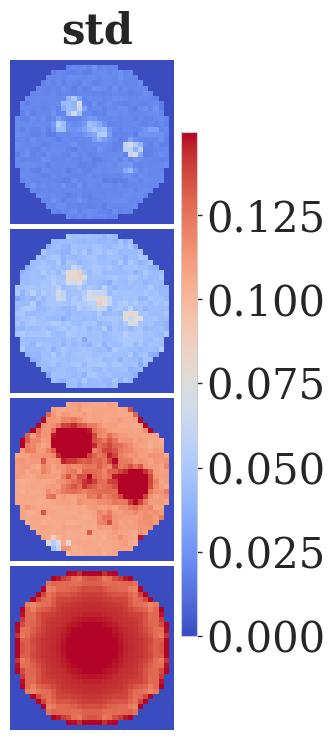}
        \end{subfigure}
        \caption{ICG 1\,\textmu g/ml, 1.5\,\textmu g/ml}
    \end{subfigure}
    \begin{subfigure}{0.8\textwidth}
                \begin{subfigure}{0.1\textwidth}
            \includegraphics[width=\linewidth]{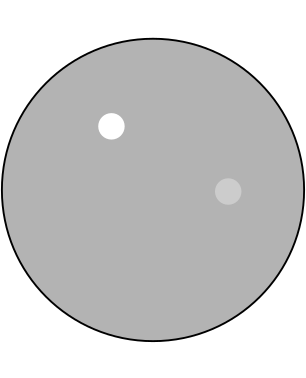}
            \vspace{5mm}
        \end{subfigure}
        \begin{subfigure}{0.7\textwidth}
            \includegraphics[height=0.27\globaltextwidth]{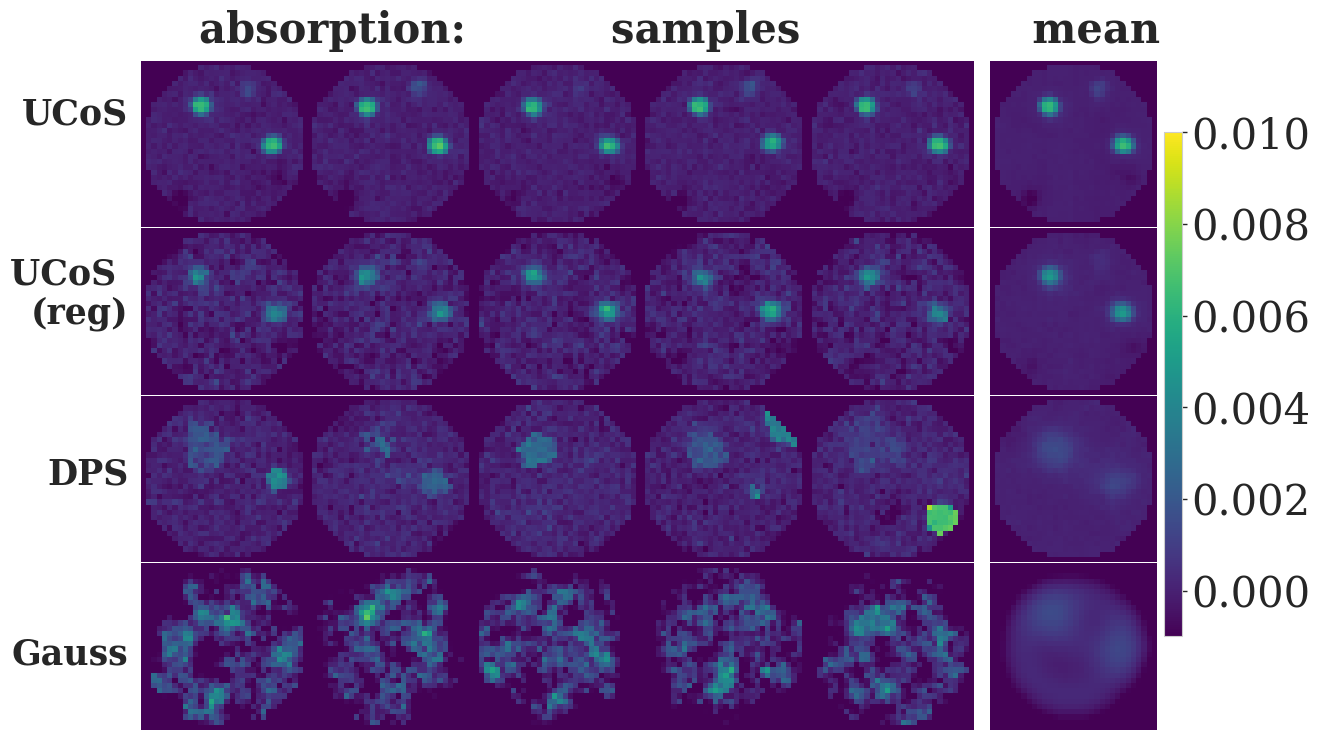}
        \end{subfigure}
        \begin{subfigure}{0.18\textwidth}
            \includegraphics[height=0.27\globaltextwidth]{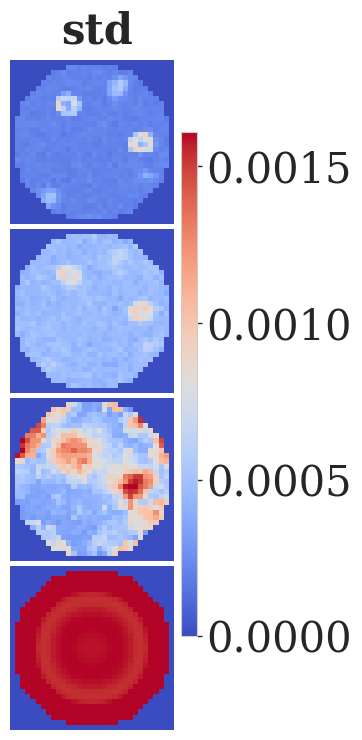}
        \end{subfigure}
                \begin{subfigure}{0.1\textwidth}
            \includegraphics[width=\linewidth]{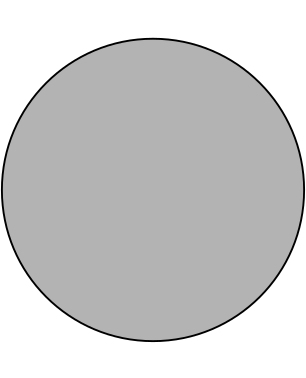}
            \vspace{5mm}
        \end{subfigure}
        \begin{subfigure}{0.7\textwidth}
        \includegraphics[height=0.27\globaltextwidth]{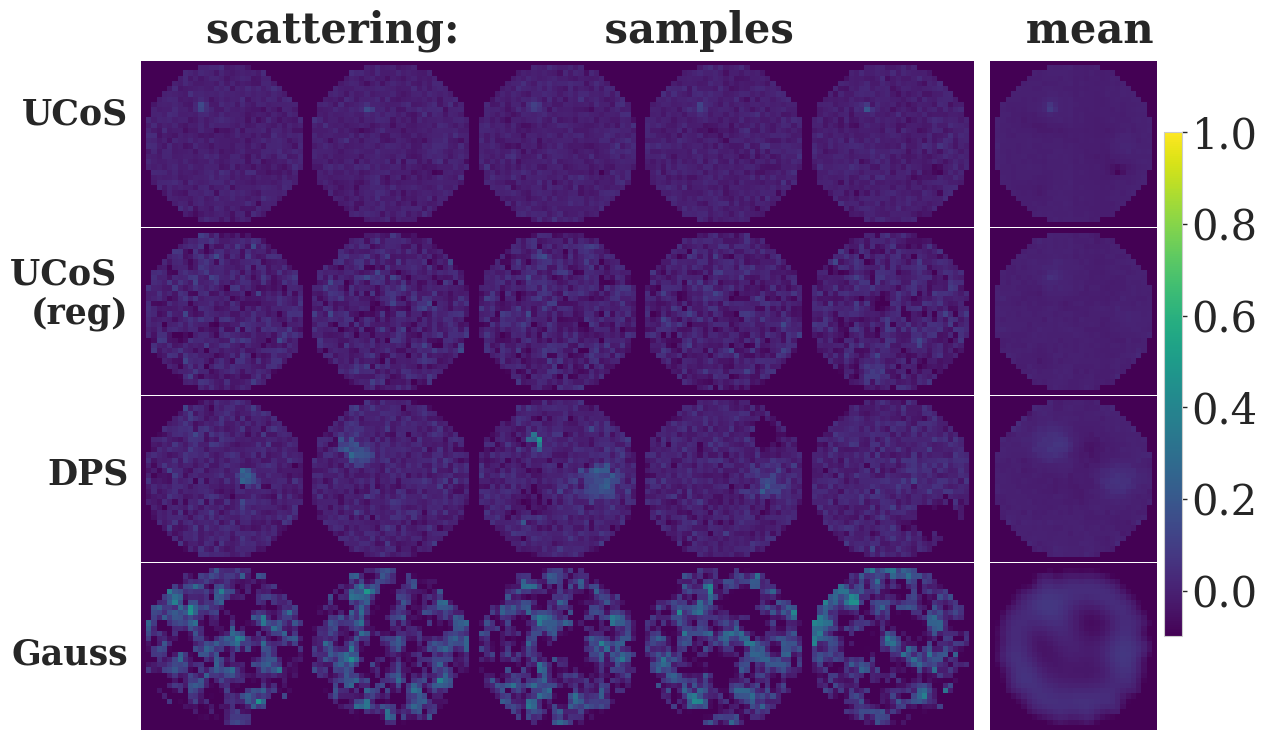}
        \end{subfigure}
        \begin{subfigure}{0.18\textwidth}
            \includegraphics[height=0.27\globaltextwidth]{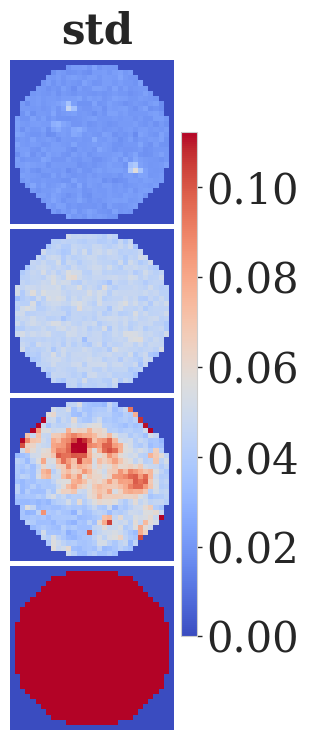}
        \end{subfigure}
        \caption{BPSi NPs 20\,\textmu g/ml, 30\,\textmu g/ml}
    \end{subfigure}
    \caption{Reconstructions using the experimental data measured in phantoms with two inclusions containing: (a) ICG 1\,\textmu g/ml and 1.5\,\textmu g/ml; (b) BPSi NPs 20\,\textmu g/ml and 30\,\textmu g/ml.
    Rows in each panel show different reconstruction methods listed from top to bottom: UCoS, regularized UCoS, DPS, Gaussian OU process prior.
    Each row presents from left to right: approximate ground truth, 5 samples, sample average and standard deviation.}
    \label{fig:expt2}
\end{figure}

\FloatBarrier

\section{Discussion and Conclusions}
This work studies SBD models for posterior sampling in the severely ill-posed DOT difference imaging problem. We consider experimental data which is naturally subject to approximation errors and conduct simulations with full- and limited-view geometries. 
We introduce a novel regularization of SBD models based on a convex combination of two score functions, such as a model-based and a data-driven score function. Its behaviour is analysed from a theoretical perspective by proving that the convex combination, approximately in the small diffusion time setting, corresponds to the score function of the geometric mixture of the underlying distributions. 

In a numerical study, a classical model-based approach, the approximate diffusion-based method DPS, and the UCoS framework as an exact posterior sampling method are compared. 
The UCoS-based posterior sampling provides the most reliable reconstructions among the considered methods in terms of recovering inclusion locations and amplitudes under full-view, limited-view, and experimentally acquired measurements.
In contrast, DPS and classical model-based reconstructions frequently exhibit missing or spurious inclusions or even structural anomalies, particularly under limited-view conditions and for experimental data.
This demonstrates the importance of exact posterior sampling for maintaining reconstruction fidelity under realistic measurement conditions in the DOT problem.
The introduced regularized variant further improves performance in challenging scenarios, particularly under limited-view geometries and distribution shifts, while maintaining competitive performance in experimental settings.

\fabian{The SBD models were compared against a Gaussian prior, which remains a widely used regularization approach in DOT \cite{hoshi2016overview, Aspri2024}. Its main advantage is analytical tractability, which enabled a direct comparison of the posterior distribution. More expressive priors, such as total variation or sparsity-promoting priors, are predominantly used for MAP estimation and are not usually amenable to full posterior sampling, which is the primary objective of this work. While such priors would likely reduce the performance gap, we expect the advantages of UCoS to persist, especially in the limited-view and experimental settings, where model mismatch and severely reduced measurement information make the learned prior particularly beneficial. In the full-view setting, a TV-regularized MAP estimate yielded an SSIM of 0.7869, outperforming the Gaussian prior but remaining clearly below all SBD methods.}

Overall, the study highlights the limitations of approximate diffusion-based sampling for severely ill-posed problems and under model mismatch and demonstrates the advantages of UCoS-based methods for stable and accurate reconstruction in realistic DOT inverse problems. 
\fabian{Nonetheless, we note several limitations of this study. The linearized forward model, while widely used and generally robust, introduces its own approximation error, whose interaction with SBD-based sampling has not been characterized.}

\fabian{Several directions remain for future work. On the application side, extensions to 3D geometries, training with experimentally measured noise statistics, and incorporation of source and detector calibration uncertainties are open challenges. On the theoretical side, the geometric mixture interpretation is currently restricted to finite dimensions and to the small diffusion-time regime. Extending the geometric mixtures to infinite dimensions and studying the approximation of the corresponding score in infinite dimensions remains a direction for future work. }

\section*{Acknowledgements}
LT acknowledges support from the Austrian Science Fund (FWF), Elise-Richter grant DOI 10.55776/V1000. Part of this research was performed while DLD was visiting the Institute for Mathematical and Statistical Innovation (IMSI), which is supported by the National Science Foundation (Grant No. DMS-2425650). Funding was also received from the European Research Council (ERC) under the Horizon 2020 research and innovation program of the European Union (Grant agreement No. 101125225 and 101001417) and the Research Council of Finland through the Flagship of Advanced Mathematics for Sensing Imaging and Modelling (Decision Numbers 358944 and 359183), Centre of Excellence in Inverse Modelling and Imaging (Decision Numbers 353086 and 353094) and project grants (Decision Number 348504). Support by the Austrian Science Fund (FWF) grant I6667-N and the Magnus Ehrnrooth Foundation is acknowledged.

\printbibliography

\end{document}